\newcommand{\tablestyle}[2]{\setlength{\tabcolsep}{#1}\renewcommand{\arraystretch}{#2}\centering}
\definecolor{mydarkblue}{rgb}{0,0.08,0.45}
\definecolor{mydarkgreen}{RGB}{0, 139, 69}
\begin{document}
\ArticleType{RESEARCH PAPER}
\Year{2025}
\Month{}
\Vol{}
\No{}
\DOI{}
\ArtNo{}
\ReceiveDate{}
\ReviseDate{}
\AcceptDate{}
\OnlineDate{}
\AuthorMark{}

\title{Task aware dreamer for task generalization in reinforcement learning}{Task aware dreamer for task generalization in reinforcement learning}

\author[1,\dag]{Chengyang Ying}{}
\author[1,\dag]{Xinning Zhou}{}
\author[1]{Zhongkai Hao}{}
\author[1, *]{\\ Hang Su}{{suhangss@mail.tsinghua.edu.cn}}
\author[1]{Songming Liu}{}
\author[1]{Dong Yan}{}
\author[1,*]{Jun Zhu}{{dcszj@mail.tsinghua.edu.cn}}

\contributions{These authors contributed equally to this work and are ordered by the dice.}

\address[1]{Department of Computer Science and Technology, \\ Beijing National Research Center for Information Science and Technology, \\ Tsinghua-Bosch Joint Center for Machine Learning, Institute for Artificial Intelligence, \\ Tsinghua University, Beijing 100084, China}

\abstract{
A long-standing goal of reinforcement learning is to achieve agents that can learn on various training tasks and generalize well on unseen tasks with similar transition dynamics but different reward functions. For instance, a household robot needs to use the same embodiment to complete various tasks characterized by distinct reward functions.
To develop agents capable of handling this challenge, we construct Reward-Informed World Models (RIWM) that learns task-specific features while utilizing similar transition dynamics of trajectories collected from different tasks.
The world model is optimized with a novel algorithm named Task Aware Dreamer (TAD), which employs a task context term to guide the world model to differentiate tasks and thus enhance task generalization.
We further provide theoretical analyses to show that TAD's key components are essential for handling task generalization.
To establish the necessity of the policy hypothesis set utilized in TAD, we introduce a novel metric named Task Distribution Relevance (TDR), which quantitatively measures the relevance of different tasks.
Tasks with high TAD often have vastly different optimal polices and thus are impossible to solve with traditional Markovian policy hypothesis sets, thereby necessitating the reward-informed policy used in TAD.
Extensive experiments in both image-based and state-based benchmarks show that TAD can significantly improve the performance of handling various training tasks, especially for those with high TDR, and exhibits strong zero-shot generalization ability to unseen tasks. We also conduct ablation studies that demonstrate TAD’s potential to handle transition variations in cross-embodiment scenarios.
}

\keywords{reinforcement learning, task generalization, world models, model-based reinforcement learning, deep learning}

\maketitle


\section{Introduction}

Although Deep Reinforcement Learning (DRL) agents have demonstrated significant advancements in various fields~\cite{1,2}, their achievements are usually limited to specific training tasks. 
This tendency toward specialization harms the real-world applicability of DRL agents, where robust generalization across diverse tasks is necessary. For example, a household robot needs to handle different tasks that share the same underlying transition dynamics (i.e., the same embodiment and physical environment) but are characterized by distinct reward functions.
Developing generalizable agents that can recognize and handle these subtle variations remains an active and important area of research in DRL.

Compared to the traditional single-task setting, we formalize the problem as the task-distribution setting (Sec.~\ref{sec_policy}), where each training/testing task is sampled from a task distribution $\mathcal{T}$.
To develop agents capable of handling this situation, we show that the world model~\cite{3,4,5} is a promising pathway to address the challenges posed by the variability of tasks. In this work, we first demonstrate that \textit{world models can enhance the sample efficiency when handling the task-distribution setting}, particularly by exploiting shared dynamic structures of trajectories sampled from different tasks (Theorem~\ref{thm-model}). While current world models are primarily tailored for the single-task setting (left of Fig.~\ref{pgm_tad_fig}), we extend the probabilistic graphical model to handle the task-distribution setting (right of Fig.~\ref{pgm_tad_fig}) by introducing the task random variable $\mathcal{M}$. 
Building upon this framework, we extend the traditional world~\cite{3,4,5} models to our Reward-Informed World Models (RIWM, Eq.~\eqref{eq_world_model}). RIWM incorporates historical reward signals to optimize latent states and integrates a task model to infer the task context of the trajectory.
We further propose a novel algorithm named \textbf{T}ask \textbf{A}ware \textbf{D}reamer (\textbf{TAD}) to optimize the aforementioned RIWM by maximizing the variational evidence lower bound.
The derived task context term encourages the world model to learn compact representations that effectively distinguish among tasks, thereby enhancing the agents’ ability to generalize across tasks. In practice, we implement this objective using two alternative approaches: a cross-entropy–based method (TAD-CE) and a supervised contrastive method (TAD-SC).

\begin{figure}[t]
\centering
\subfloat[Overview of this work\label{fig_overview}.]{
    \centering
    \includegraphics[height=4.2cm,width=6.4cm]{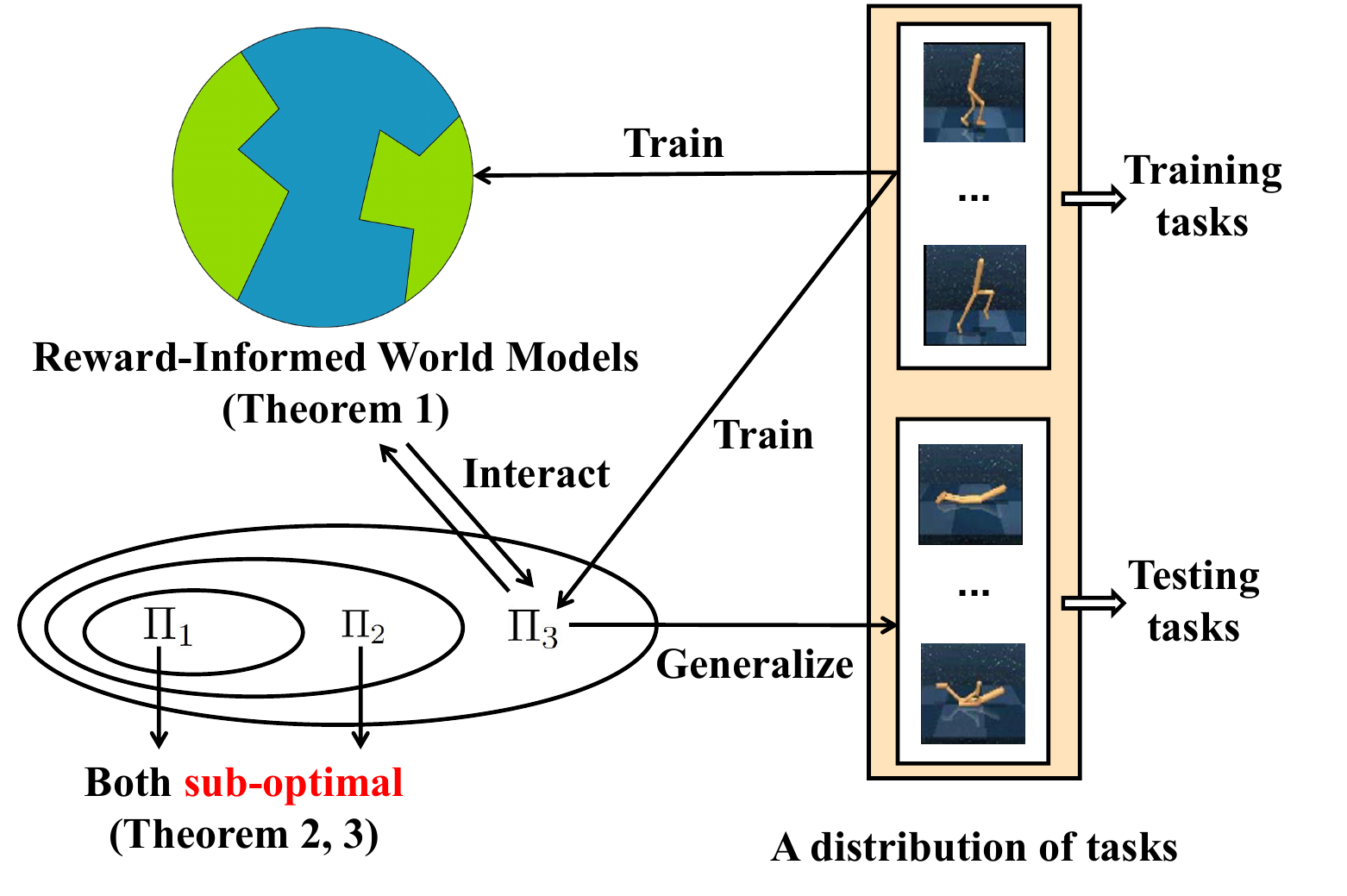}
}
\subfloat[Probabilistic graphical model designs\label{pgm_tad_fig}.]{
    \centering
    \includegraphics[height=4.2cm,width=7.8cm]{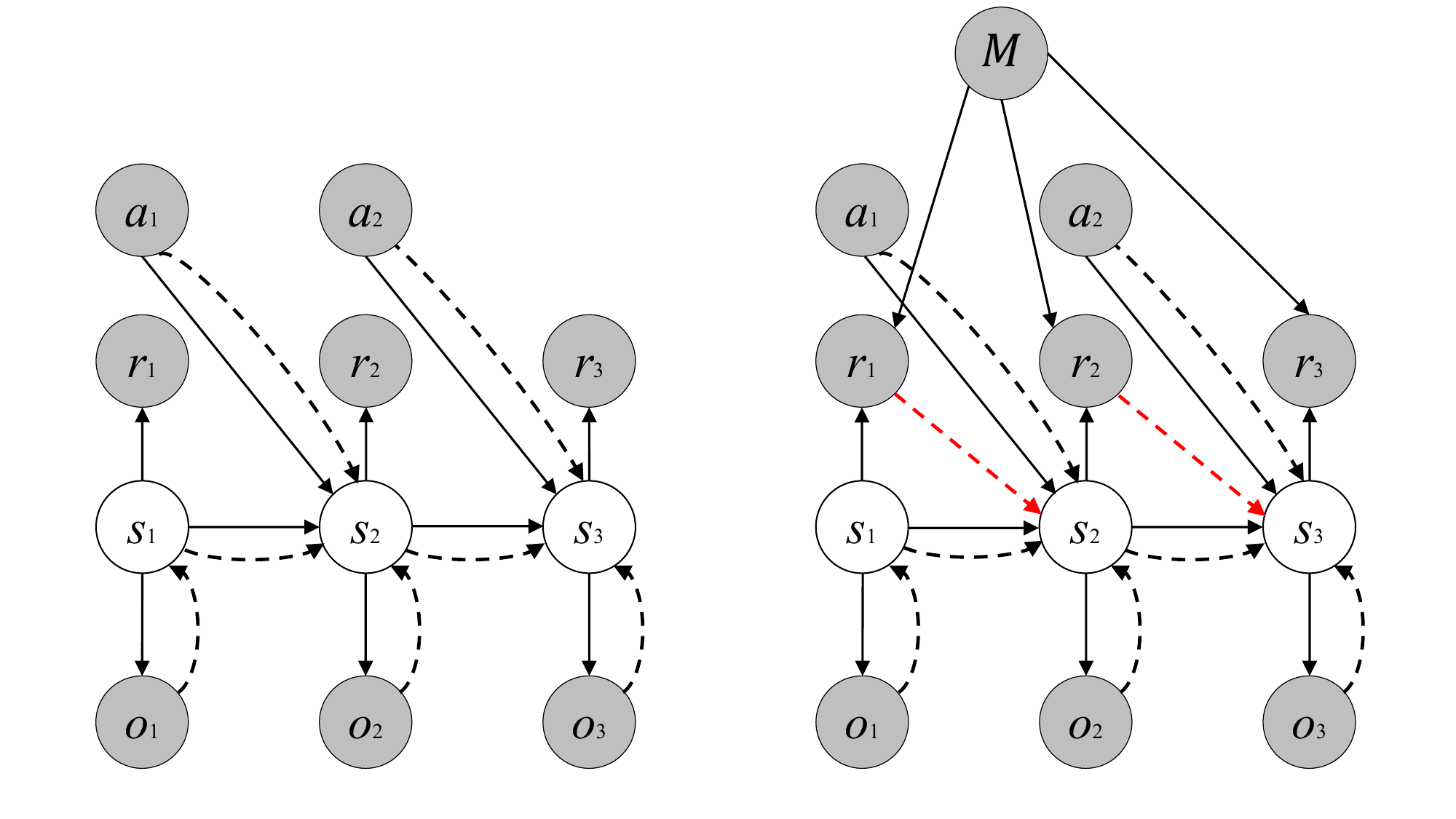}
}
\caption{(a) \textbf{An overview.} Given a task distribution, the agent is trained on the training tasks and is expected to generalize to the test tasks in a zero-shot manner.
Here $\Pi_1$ represents the set of Markovian policies, while $\Pi_2$ denotes all policies that include historical state and action information.
To improve generalization ability, we propose TAD, which utilizes $\Pi_3$ to encode all historical information for inferring the current task and introduces novel Reward-Informed World Models to capture task-specific latent features.
(b) Probabilistic graphical model designed for the single-task setting (\textbf{left}) and the task-distribution setting (\textbf{right}). The latter illustrates the design of Reward-Informed World Models, where solid and dashed lines denote the \emph{generative process} and the \emph{inference model}, respectively.
}
\end{figure}

To validate the effectiveness of the proposed method, we theoretically demonstrate that the key innovations of TAD are essential for addressing task generalization. Specifically, we first establish the necessity of adopting the reward-informed policy hypothesis $\Pi_3$ in the task-distribution setting. Although several meta RL methods~\cite{6,7} practically choose $\Pi_3$, the relationship between the task distribution and the expressiveness of the policy hypothesis set remains insufficiently understood.
In particular, we address the following question: \textit{Why are the commonly adopted hypothesis set $\Pi_1$ of Markovian policies and set $\Pi_2$ of policies that encode historical states and actions inadequate for the task-distribution setting?}

To answer this question, we propose a novel metric named Task Distribution Relevance (TDR), which encapsulates the relevance among tasks within the distribution through their optimal Q-functions. 
We then prove that both $\Pi_1$ and $\Pi_2$ are \textit{sub-optimal} in the task-distribution setting. 
This sub-optimality is characterized by TDR (Theorem~\ref{thm-2}). Specifically, for task distributions with high TDR, the performance of these two policy hypothesis sets can degrade significantly, a phenomenon we also empirically demonstrate in experiments (Sec .~\ref{sec_exp_tdr}).
This result provides a theoretical justification for the adoption of $\Pi_3$ in TAD as well as in prior meta-RL methods when addressing task-distribution settings.
Beyond the choice of policy hypothesis set, we further analyze the task context term in TAD and prove that it effectively reduces the gap between the policy return and the optimal return (Theorem~\ref{thm-4}).

We conduct extensive experiments to evaluate TAD’s ability to enhance task generalization, using commonly adopted benchmarks from the DeepMind Control Suite~\cite{8} and MuJoCo~\cite{9}. Both training and testing tasks are drawn from a task distribution $\mathcal{T}$, which requires the agents to simultaneously learn from multiple training tasks and generalize to potentially unseen test tasks.
The experimental results corroborate our theoretical analysis, showing that policies from $\Pi_1$ and $\Pi_2$ struggle to handle tasks from distributions with a high TDR.
Conversely, TAD not only effectively handles diverse training tasks but also exhibits better zero-shot generalization performance than the few-shot adaptation performance of the SOTA meta-RL method MAMBA~\cite{7}. Moreover, our ablation studies demonstrate the versatility of TAD, highlighting its potential for dynamics generalization and handling cross-embodiment task settings.

Overall, our contributions are summarized as follows:
\begin{itemize}
\item We provide theoretical insights showing that world models can exploit shared transition dynamics across tasks to improve sample efficiency for task generalization (Sec.~\ref{sec_wm}).
\item We extend world models to the task-distribution setting by proposing TAD, which introduces a task context term that encourages representations to capture task-relevant information (Sec.~\ref{sec_tad}).
\item We theoretically demonstrate the effectiveness of TAD’s components for task generalization and introduce a novel metric named TDR to quantify task distribution relevance (Sec.~\ref{theoretical}).
\item Extensive experiments show that TAD outperforms a range of SOTA baselines and exhibits strong generalization flexibility across both image-based and state-based benchmarks (Sec.~\ref{sec_expe}).
\end{itemize}

\section{Related work} 

\paragraph{Generalization in RL.} Current RL agents often struggle to generalize to new tasks~\cite{10}.
Prior work has explored a wide range of training strategies to address this challenge, including loss regularization~\cite{11,12}, successor representations~\cite{13,14}, network architecture design~\cite{15,16}, data augmentation~\cite{17,18}, leveraging trajectory similarities~\cite{19}, and contrastive objectives~\cite{20}.
Beyond these approaches, several studies have investigated the relationship between policy generalization and the distribution over environments. In particular, LEEP~\cite{21} demonstrates that generalizing to unseen environments introduces partial observability, which makes deterministic Markovian policies \textit{suboptimal}. Moreover, empirical evidence from prior works~\cite{22,21} suggests that stochastic or non-Markovian policies can enhance generalization ability.
Despite these insights, a theoretical gap remains regarding the expressiveness of different hypothesis sets and their relationship to the environment distribution, both of which are crucial for developing more generalizable agents.

\paragraph{Multi-task RL and Meta RL.} These two topics are closely related to generalization in RL. \textit{Multi-task RL}~\cite{23,24,25,26} primarily focuses on excelling across all training tasks, but often struggles to generalize to previously unseen tasks.
In contrast, \textit{Meta RL} aims to equip agents with the ability to rapidly adapt to new tasks with a few episodes, through approaches such as gradient-based~\cite{27} and context-based methods~\cite{28,29}.
In addition, several model-based approaches~\cite{30,7} leverage learned environment models to improve sample efficiency.
Although certain context-based methods, such as VariBAD~\cite{31}, demonstrate zero-shot generalization capabilities, it remains important to directly study the mechanisms of generalization in RL and to develop algorithms specifically tailored to this objective.

\paragraph{World models.} World models~\cite{3} aim to learn rich representations of the environment, offering potential advantages for generalization by capturing task-invariant features.
Classical approaches typically adopt the Recurrent State Space Model (RSSM)~\cite{4} for planning~\cite{4} and policy learning~\cite{5,32,33}.
Subsequent work has proposed reconstruction-free world models~\cite{34}, temporal predictive coding~\cite{35}, cooperative reconstruction~\cite{36}, and Denoised MDPs~\cite{37} to more effectively encode task-relevant information.
World models have also been leveraged to extract environment-invariant features through videos~\cite{38} or unsupervised exploration~\cite{39,40}, followed by fine-tuning on downstream tasks.
Despite these advances, most existing world models are designed for single-task settings and require task-specific fine-tuning to handle multiple tasks, which limits their ability to achieve zero-shot generalization to unseen tasks.

\section{Preliminary}
\label{sec_policy}

We consider a setting with a task distribution $\mathcal{T}$ over Partially Observable Markov Decision Processes (POMDPs), where different tasks share identical dynamics but differ in their reward functions.
Formally, each POMDP $\mathcal{M} \sim \mathcal{T}$ is defined as $\mathcal{M}=(\mathcal{S},\mathcal{A}, \mathcal{P}, \mathcal{R}_{\mathcal{M}},\Omega, \mathcal{O})$.
Here, $\mathcal{S}$ and $\mathcal{A}$ denote the state and action spaces, respectively. For $\forall (s,a)\in\mathcal{S}\times \mathcal{A}$, $\mathcal{P}(\cdot|s,a)$ specifies the Markovian state transition dynamics, while $\mathcal{R}_{\mathcal{M}}(s,a)$ denotes the task-specific reward function.
The underlying state $s$ is not directly observable by the agent. Instead, the agent receives observations from the observation space $\Omega$ according to the observation function $\mathcal{O}(\cdot \mid s)$.

Following prior work on meta-RL and generalization, we consider policies that encode all historical information (we prove the necessity of this assumption in Sec.~\ref{theoretical}).
Formally, at each time step $t$, the agent with the policy $\pi$ will utilize the whole historical trajectory $(o_0,a_0,r_0,o_1,...,o_t)$ to sample action $a_{t}$, arrive at the next state $s_{t+1}\sim\mathcal{P}(\cdot|s_t,a_{t})$, and receive current reward $r_{t}=\mathcal{R}_{\mathcal{M}}(s_t,a_t)$. The performance of policy $\pi$ on task $\mathcal{M}$ is measured by the expected discounted return $J_{\mathcal{M}}(\pi) = \mathbb{E}_{\tau\sim\pi}\left[R(\tau)\triangleq\sum_{t=0}^{\infty}\gamma^t r_t\right]$. Our objective is to learn a policy that maximizes the expected return over the task distribution $\mathcal{T}$: $\max_{\pi} \mathbb{E}_{\mathcal{M}\sim\mathcal{T}} \left[J_{\mathcal{M}}(\pi)\right]$.

In practice, given the task distribution $\mathcal{T}$, we sample $M$ training tasks $\{\mathcal{M}_m\}_{m=1}^M$ to optimize the agent by maximizing the empirical objective $\frac{1}{M}\sum_{m=1}^M J_{\mathcal{M}_{m}}(\pi)$. At test time, we sample $N$ previously unseen tasks $\{\mathcal{M}_{M+n}\}_{n=1}^{N}$ to evaluate the agent’s generalization performance, measured by $\frac{1}{N}\sum_{n=1}^N J_{\mathcal{M}_{M+n}}(\pi)$.

\section{Task aware dreamer}
\label{sec_tdr}

In this section, we begin by establishing that world models are beneficial for task generalization. We then introduce Reward-Informed World Models (RIWM) and propose Task-Aware Dreamer (TAD) as a novel approach to promote generalization. Finally, we provide theoretical analyses demonstrating that TAD can effectively enhance performance across various tasks.

\subsection{Reward-informed world models}
\label{sec_wm}

Our first observation is that world models are effective in narrowing the hypothesis space of the optimal Q-function when handling task generalization:
\begin{theorem}[Proof in Appendix A.1]
\label{thm-model}
Set $\mathcal{Q}$ denotes the space of observation-action Q-functions. Given $M$ tasks $\{\mathcal{M}_m\}_{m=1}^M$ and corresponding dataset $\mathcal{D}_m=\{(o_t^m,a_t^m, r_t^m, o_{t+1}^m)\}$, we use $\mathcal{H} = \mathcal{Q}^M$ to represent the product space of $M$ Q-function spaces, i.e., $\forall \{q_m\}_{m=1}^M\in\mathcal{H}$, $q_m:\mathcal{S}\times\mathcal{A}\rightarrow\mathbb{R}$ belongs to $\mathcal{Q}$. Considering the following three hypothesis sets $\mathcal{H}_1, \mathcal{H}_2, \mathcal{H}_3 \subseteq \mathcal{H}$:
\begin{equation*}
\begin{split}
    \mathcal{H}_1 =& \{ (q_m)_{m=1}^M  | q_m (o_t^m, a_t^m) = r_t^m + \gamma \max_{a'} q_m (o_{t+1}^m, a') \} \\
    \mathcal{H}_2 =& \{ (q_m)_{m=1}^M | \exists (p_m)_{m=1}^M: p_m(o_t^m, a_t^m) = o_{t+1}^m,\\
    & \exists (r_m)_{m=1}^M: r_m(o_t^m, a_t^m) = r_{t}^m,
    q_m (o, a) = r_m(o,a) + \gamma \max_{a'} q_m (p_m(o, a), a'), \forall o,a \} \\
    \mathcal{H}_3 =& \{ (q_m)_{m=1}^M  |\exists p:p(o_t^m, a_t^m) = o_{t+1}^m,\\
    & \exists (r_m)_{m=1}^M: r_m(o_t^m, a_t^m) = r_{t}^m, 
    q_m (o, a) = r_m(o, a) + \gamma \max_{a'} q_m (p(o, a), a'), \forall o, a \}.
\end{split}
\end{equation*}
The hypothesis sets satisfy $\mathcal{H}_3\subseteq \mathcal{H}_2\subseteq \mathcal{H}_1$.
\end{theorem}

Here, $\mathcal{H}_1$ contains Q-functions that satisfy the optimal Bellman equation with data from $\mathcal{D}_m$. In contrast, $\mathcal{H}_2$ and $\mathcal{H}_3$ contain Q-functions that satisfy the optimal Bellman equation using data generated by world models trained on a single dataset and on all datasets, respectively.  
Consequently, $\mathcal{H}_2\subseteq\mathcal{H}_1$ utilizes the generalization ability of world models, extending prior results in the fixed-task setting~\cite{41}. Moreover, $\mathcal{H}_3\subseteq\mathcal{H}_2$ suggests that shared dynamic structures across tasks benefit learning world models.  
Overall, Theorem~\ref{thm-model} indicates that training world models narrows the hypothesis space of plausible Q-functions, thereby facilitating agent learning and improving generalization.

As existing world models~\cite{4,5} are primarily designed for the single-task setting, we propose Reward-Informed World Models (RIWM) for the task-distribution setting.
We begin by analyzing the probabilistic graphical model of this setting, illustrated in Fig.~\ref{pgm_tad_fig}, where the rewards depend not only on previous states and actions but also on the current task. Accordingly, the joint distribution can be expressed as:

\begin{equation}
\begin{aligned}
    &p(s_{1:T},o_{1:T},r_{1:T},a_{1:T-1},\mathcal{M})
    = p(\mathcal{M})\prod_{t=1} p(s_{t+1} | s_t, a_t) p(o_t|s_t) p(r_t | s_t, \mathcal{M}),
\end{aligned}
\end{equation}
and we choose to adopt the reward-informed inference model (the necessity of reward-informed policy in task generalization is discussed in Sec.~\ref{theoretical}) to approximate state posteriors as:
\begin{equation}
\begin{aligned}
    q(s_{1:T}|o_{1:T},a_{1:T},r_{1:T}) = \prod_{t=1}^T q(s_t|s_{t-1}, a_{t-1}, r_{t-1}, o_t).
\end{aligned}
\end{equation}

Based on this inference model, we can construct the variational lower bound of the action-conditioned probability as:
\begin{equation}
\begin{split}
\label{eq_elbo}
    \ln p(o_{1:T},r_{1:T}, \mathcal{M} |a_{1:T})
    \ge &\sum_{t=1}^T \mathbb{E}_{q(s_{t}|o_{\leq t},a_{< t},r_{< t})} \left[ \ln p(o_t, r_t|\mathcal{M}, s_t) - \mathrm{KL}\left(q(s_t|o_{\leq t}, r_{< t}, a_{< t})\| p(s_t|s_{t-1}, a_{t-1})\right)\right] \\
    &+ \mathbb{E}_{q(s_{1:T}|o_{1:T},a_{1:T},r_{1:T})}\left[\ln p(\mathcal{M} |s_{1:T})
    \right].
\end{split}
\end{equation}

This result is a general form of the single-task setting~\cite{4}, with the full derivation in Appendix A.2. The first two terms in  Eq.~\eqref{eq_elbo} correspond to reconstructing observations and predicting rewards, which are similar to the single-task setting. The final novel term is dedicated to predicting the current task from historical information, which enhances generalization by encouraging the model to infer the current task context.
Building on RSSM via Eq.~\eqref{eq_elbo}, our RIWM consists of:

\begin{equation}
\begin{aligned}
\label{eq_world_model}
    &\text{Deterministic state model:}& & h_t = f(h_{t-1}, s_{t-1}, a_{t-1}, r_{t-1}), & \\
    &\text{Transition model:} & &p_{\theta}(s_t | h_t),& \\
    &\text{Observation model:}& & p_{\theta}(o_t | h_t, s_t), & \\
    &\text{Reward model:} & &p_{\theta}(r_t | h_t, s_t),&\\
    &\text{Task model:} & & p_{\theta}(\mathcal{M}|h_t,s_t).&
\end{aligned}
\end{equation}

Here, the hidden state $h_t$ encodes historical states, actions, and rewards using a gated recurrent unit (GRU)~\cite{42}, i.e., $f(\cdot) = \text{GRU}(\cdot)$. The transition, observation, reward, and task models then predict the next state, observation, reward, and task context, respectively.

\subsection{Optimization}
\label{sec_tad}
Based on the above analyses, we now describe the training of TAD in detail. Following prior world model approaches~\cite{5}, TAD adopts an alternating training scheme between the RIWM and the policy.

To ensure balanced learning across tasks, TAD maintains $M$ replay buffers $\{\mathcal{D}_m\}_{m=1}^M$ to store trajectories collected from the corresponding task $\{\mathcal{M}_m\}_{m=1}^M$. During training, TAD samples data from each replay buffer and updates the RIWM by optimizing the objective in Eq.~\eqref{eq_elbo} as:

\begin{equation}
\begin{split}
\label{eq_loss}
    &L_{\text{TAD}} = \sum_{i=1}^M \mathbb{E}_q \left[  \sum_{t=1}^T \ln p_{\theta} (o_t^i|h_t^i, s_t^i) + \sum_{t=1}^T \ln p_{\theta} (r_t^i | h_t^i, s_t^i)\right.
    - \left.\sum_{t=1}^T D_{\text{KL}}(q(s_t^i|h_t^i, o_t^i) \| p_{\theta}(s_t^i|h_t^i))  + 
    L_{\text{task}}\right].
\end{split}
\end{equation}

In Eq.~\eqref{eq_loss}, the first three terms are similar to Dreamer for reconstructing observations, predicting rewards, and inferring states.
In addition, TAD introduces a novel task context term $L_{\text{task}}$ to distinguish tasks and learn task-aware embeddings. Specifically, we propose two alternative formulations for this term: cross-entropy and self-contrastive, which are described in detail below. A theoretical justification for their effectiveness is provided in Sec.~\ref{theoretical}.

\paragraph{Cross-entropy.} The final term in Eq.~\eqref{eq_elbo} emphasizes the need to maximize the log-probability of the task context, enabling the model to distinguish tasks based on historical information. Following prior work~\cite{23}, TAD-CE directly maximizes this log-probability over different tasks represented by one-hot vectors. The task loss $L_{\text{task}}$ is thus defined as:

\begin{equation}
\begin{split}
    L_{\text{task}} = \sum_{t=1}^T \ln p_{\theta} (m^i | h_t^i, s_t^i).
\end{split}
\end{equation}

\paragraph{Supervised-contrastive.} In addition to directly maximizing the log-probability, we further propose TAD-SC, inspired by supervised contrastive learning~\cite{43}. Specifically, TAD-SC encourages task embeddings from the same task to be close while pushing embeddings from different tasks apart, enabling RIWM to distinguish tasks more effectively. Formally, we assume that the task model maps all sampled data as $\{m_j\}_{j=1}^{M\times T}$ and define $L_{\text{task}}$ as:

\begin{equation}
\begin{split}
    L_{\text{task}} = \sum_{j=1}^{M\times T} \sum_{a\in A(j)} \ln \frac{\exp(m_j\cdot m_a / \tau)}{\sum_{b\neq j} \exp(m_j\cdot m_b / \tau)},
\end{split}
\end{equation}
where $A(j)$ is the set of data indices that are collected from task $m_j$, and $\tau$ is the temperature parameter, which is set as 0.1 following previous works~\cite{43}.

For training the actor-critic networks, we extend the actor-critic learning in Dreamer to the task distribution. 
Specifically, we first sample a set of states from the replay buffers and use our RIWM to imagine trajectories starting from these states (illustrated in Fig.~\ref{fig_imagine}). These imagined trajectories can capture task-specific features and are thus beneficial for the agent to gain better generalization.
Once the imagined trajectories are obtained, the actor and critic are optimized by maximizing the $\lambda$-return~\cite{44} and regressing the TD targets~\cite{45}, respectively.

\begin{algorithm}[t]
    \caption{Task Aware Dreamer (TAD)} 
    \label{algo_tad}
    \begin{algorithmic}[1]
        \REQUIRE $M$ training tasks $\{\mathcal{M}_m\}_{m=1}^M$, $M$ replay buffers $\{\mathcal{D}_m\}_{m=1}^M$, $N$ test tasks $\{\mathcal{M}_{M+n}\}_{n=1}^N$, initialize parameters of world models, the policy, and the critic.
        \FOR{$\text{iteration step} = 1,2,...$}
        \FOR{$\text{update step} = 1,2,...$}
        \STATE Sample $o$-$a$-$r$ pairs $\{(o_t^i,a_t^i,r_t^i)_{t=1}^{T}\}$ form each replay buffer $\mathcal{D}_i, i=1,2,...,M$
        \STATE Calculate the deterministic state $h$ and further calculate model states $s$.
        \STATE Update the world models via optimizing Eq.~\eqref{eq_loss}.
        \STATE Collect imagined trajectories from each $s$ via the policy and the world models and use these imagined trajectories to update the policy and the critic.
        \ENDFOR
        \STATE Collect trajectories from $\mathcal{M}_m (m=1,2,...,M)$ and store them into the replay buffer $\mathcal{D}_m$.
        \ENDFOR
    \STATE Evaluate the agent in testing environments $\{\mathcal{M}_{M+n} \}$.
    \end{algorithmic}  
\end{algorithm}

\subsection{Theoretical analyses}
\label{theoretical}

Below, we provide theoretical analyses to show that TAD's components are simple but effective for task generalization. 

\paragraph{Are policies that utilize all historical information in TAD necessary for task generalization?}
Below, we introduce $3$ types of widely used policy hypothesis sets and show that $\Pi_3$, used in TAD, is necessary for handling task generalization.
\begin{enumerate}
\item Markovian policy set $\Pi_1$~\cite{45,46}, i.e., $\Pi_1 = \{\pi | \pi:\mathcal{S}\rightarrow \Delta(\mathcal{A})\}$, where $\Delta(\mathcal{A})$ represents a distribution over $\mathcal{A}$, which is widely used and optimal for the single-task setting;
\item $\mathcal{S}$-$\mathcal{A}$ memorized policy set $\Pi_2$~\cite{5,32,22}, i.e., $\Pi_2 = \{\pi | \pi:\mathcal{H}\rightarrow \Delta(\mathcal{A})\}$, where $\mathcal{H} = \cup_{t=1}^{\infty}\mathcal{H}_t, \mathcal{H}_t = (\mathcal{S}\times \mathcal{A})^{t-1} \times \mathcal{S}$;
\item $\mathcal{S}$-$\mathcal{A}$-$\mathcal{R}$ memorized policy set $\Pi_3$~\cite{31,7}, 
i.e., $\Pi_3 = \{\pi | \pi:\mathcal{L}\rightarrow \Delta(\mathcal{A})\}$, where $\mathcal{L} = \cup_{t=1}^{\infty}\mathcal{L}_t, \mathcal{L}_t = (\mathcal{S}\times \mathcal{A}\times \mathbb{R})^{t-1} \times \mathcal{S}$.
\end{enumerate}

As illustrated in Fig.~\ref{fig_overview}, these hypothesis sets naturally satisfy $\Pi_1\subseteq \Pi_2\subseteq \Pi_3$. 
We then analyze their expressive ability under task distribution $\mathcal{T}$.
Let $J_{\mathcal{T}}^* \triangleq \mathbb{E}_{\mathcal{M}\sim\mathcal{T}} \left[\max_{\pi}J_{\mathcal{M}}(\pi)\right]$ denote the optimal return under $\mathcal{T}$, and $J_{\mathcal{T}}^i \triangleq \max_{\pi\in\Pi_i}\left[ \mathbb{E}_{\mathcal{M}\sim\mathcal{T}} J_{\mathcal{M}}(\pi)\right]$ the optimal return achievable by policies in $\Pi_i(i=1,2,3)$ under $\mathcal{T}$.
Our first result shows that, although $\Pi_1\subseteq \Pi_2$, they own the same expressive ability, i.e., $J_{\mathcal{T}}^1=J_{\mathcal{T}}^2$, and are both \emph{sub-optimal}:

\begin{theorem}[Sub-Optimality of $\Pi_1,\Pi_2$. Proof in Appendix A.3]
\label{thm-1}
We set $\bar{\mathcal{M}}=(\mathcal{S},\mathcal{A}, \mathcal{P}, \bar{\mathcal{R}},\gamma)$, here $\bar{\mathcal{R}} = \mathbb{E}_{\mathcal{M}\sim\mathcal{T}}[\mathcal{R}_{\mathcal{M}}]$.
For $\forall \pi \in \Pi_2$, we have $\mathbb{E}_{\mathcal{M}\sim\mathcal{T}} [J_{\mathcal{M}}(\pi)] = J_{\bar{\mathcal{M}}}(\pi)$ and further $J_{\mathcal{T}}^1=J_{\mathcal{T}}^2\leq J_{\mathcal{T}}^*$.
\end{theorem}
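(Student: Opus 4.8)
The plan is to reduce the task-distribution objective, restricted to the memoryless-of-reward class $\Pi_2$, to a single auxiliary MDP and then invoke the classical optimality of Markovian policies on that MDP. Concretely, I would first establish the claimed identity $\mathbb{E}_{\mathcal{M}\sim\mathcal{T}}[J_{\mathcal{M}}(\pi)] = J_{\bar{\mathcal{M}}}(\pi)$ for every $\pi\in\Pi_2$, and then chain this with the fact that $\bar{\mathcal{M}}$ is an ordinary MDP whose optimal value is already attained within $\Pi_1$.

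For the identity, the crux is an information-theoretic observation: a policy in $\Pi_2$ conditions only on the history of states and actions, never on observed rewards. Since all tasks $\mathcal{M}\sim\mathcal{T}$ share the same initial distribution and transition kernel $\mathcal{P}$ and differ only through $\mathcal{R}_{\mathcal{M}}$, any fixed $\pi\in\Pi_2$ induces \emph{exactly the same} law over state-action trajectories $\tau=(s_0,a_0,s_1,a_1,\dots)$ in every task. I would write $J_{\mathcal{M}}(\pi)=\mathbb{E}_{\tau\sim\pi}\big[\sum_{t}\gamma^t\mathcal{R}_{\mathcal{M}}(s_t,a_t)\big]$, note that the trajectory law does not depend on $\mathcal{M}$, take the expectation over $\mathcal{M}\sim\mathcal{T}$, and interchange the two expectations (justified by bounded rewards and the geometric factor $\gamma^t$) to pull $\mathbb{E}_{\mathcal{M}}$ inside. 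This replaces each $\mathcal{R}_{\mathcal{M}}(s_t,a_t)$ by $\bar{\mathcal{R}}(s_t,a_t)=\mathbb{E}_{\mathcal{M}\sim\mathcal{T}}[\mathcal{R}_{\mathcal{M}}(s_t,a_t)]$, yielding precisely $J_{\bar{\mathcal{M}}}(\pi)$.

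With the identity in hand the remaining claims follow by a short chain of (in)equalities. Since $\Pi_1\subseteq\Pi_2$, trivially $J_{\mathcal{T}}^1\le J_{\mathcal{T}}^2$. For the reverse, the identity gives $J_{\mathcal{T}}^2=\max_{\pi\in\Pi_2}J_{\bar{\mathcal{M}}}(\pi)$; because $\bar{\mathcal{M}}$ is a single MDP and history-dependent policies cannot outperform stationary Markovian ones on a fixed MDP (the standard dynamic-programming optimality result), this equals $\max_{\pi\in\Pi_1}J_{\bar{\mathcal{M}}}(\pi)$, which by the identity again equals $J_{\mathcal{T}}^1$. Hence $J_{\mathcal{T}}^1=J_{\mathcal{T}}^2$. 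Finally, $J_{\mathcal{T}}^2=\max_{\pi\in\Pi_2}\mathbb{E}_{\mathcal{M}}[J_{\mathcal{M}}(\pi)]\le\mathbb{E}_{\mathcal{M}}[\max_{\pi}J_{\mathcal{M}}(\pi)]=J_{\mathcal{T}}^*$, the elementary max-of-expectation versus expectation-of-max inequality, reflecting that one shared policy cannot beat task-specific optima.

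The conceptual heart, and the only step requiring care, is the identity of the first part: recognizing that reward-blind policies are literally unable to tell the tasks apart, so their return averages against the mean reward $\bar{\mathcal{R}}$. The subsequent steps are then routine, modulo citing the classical Markovian-optimality theorem for $\bar{\mathcal{M}}$ and justifying the expectation interchange. I would watch the latter point, ensuring the infinite-horizon sum is controlled (e.g.\ assuming bounded rewards so that $\sum_t\gamma^t$ dominates), and state clearly that the per-task maxima defining $J_{\mathcal{T}}^*$ may use a different optimal policy for each task, which is exactly what opens the gap to $\Pi_2$ and motivates the reward-informed class $\Pi_3$.
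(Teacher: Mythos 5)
Your proposal is correct and follows essentially the same route as the paper's proof: establish $\mathbb{E}_{\mathcal{M}\sim\mathcal{T}}[J_{\mathcal{M}}(\pi)] = J_{\bar{\mathcal{M}}}(\pi)$ by noting that a reward-blind policy induces the same state-action trajectory law in every task and interchanging the two expectations, then invoke classical Markovian optimality on the single MDP $\bar{\mathcal{M}}$ to get $J_{\mathcal{T}}^1 = J_{\mathcal{T}}^2$, and finish with the expectation-of-max versus max-of-expectation inequality for $J_{\mathcal{T}}^*$. Your added care about justifying the interchange (bounded rewards, geometric discounting) is a minor refinement the paper leaves implicit.
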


Theorem~\ref{thm-1} reveals that the cumulative returns of policies in $\Pi_1$ and $\Pi_2$ are equivalent as their returns in the ``average'' MDP $\bar{\mathcal{M}}$, where the reward function is the average of reward functions in different tasks. Moreover, since policies in $\Pi_1$ and $\Pi_2$ only select actions based on current state or historical state-action pairs, they are unable to distinguish different tasks and are thus both \emph{sub-optimal}.

To quantitatively analyze the relationship between $\mathcal{T}$ and the gap between $J_{\mathcal{T}}^1, J_{\mathcal{T}}^2$ and $J_{\mathcal{T}}^*$, we propose a novel metric named Task Distribution Relevance (TDR), which characterize the distribution $\mathcal{T}$ as:
\begin{definition}[Task Distribution Relevance]
\label{df_tdr}
For any task distribution $\mathcal{T}$ and state $s$, the Task Distribution Relevance of $\mathcal{T}$ and $s$ is defined as:
\begin{equation}
\begin{split}
    D_{\text{TDR}}(\mathcal{T}, s) &= \mathbb{E}_{\mathcal{M}\sim\mathcal{T}} [\max_a Q_{\mathcal{M}}^*(s,a)] 
    - \max_{a}\mathbb{E}_{\mathcal{M}\sim\mathcal{T}} \left[ Q_{\mathcal{M}}^*(s,a) \right].
\end{split}
\end{equation}
\end{definition}
Intuitively, TDR quantifies the relevance of tasks in $\mathcal{T}$ via optimal $Q$ functions, which determine the choice of optimal actions in corresponding tasks. Based on TDR, we can derive a bound on the performance gap:

\begin{theorem}[Regret of Markovian Policies]
\label{thm-2}
Assume $\pi_{\mathcal{M}}^* = \mathop{\arg\max}_{\pi} J_{\mathcal{M}}(\pi)$, for $\forall \pi\in\Pi_1$, we have
\begin{equation}
\begin{split}
    J_{\mathcal{T}}^* - \mathbb{E}_{\mathcal{M}\sim\mathcal{T}}\left[J_{\mathcal{M}}(\pi)\right]
    \ge &\frac{1}{1-\gamma}\mathbb{E}_{s\sim d_{\mathcal{M},\pi}} [D_{\text{TDR}}(\mathcal{T}, s)].
\end{split}
\end{equation}

Thus $J_{\mathcal{T}}^* - J_{\mathcal{T}}^2 = J_{\mathcal{T}}^* - J_{\mathcal{T}}^1\ge \frac{1}{1-\gamma}\mathbb{E}_{s\sim d_{\mathcal{M},\pi^*}} \left[D_{\text{TDR}}(\mathcal{T}, s)\right]$, here $\pi^* = \arg\max_{\pi\in\Pi_1}J_{\bar{\mathcal{M}}}(\pi)$.

\end{theorem}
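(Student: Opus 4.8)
The plan is to establish the per-task inequality first and then specialize it to the optimal member of $\Pi_1$ using Theorem~\ref{thm-1}. I would begin by rewriting the left-hand side as a single expectation over tasks. Since $J_{\mathcal{T}}^* = \mathbb{E}_{\mathcal{M}\sim\mathcal{T}}[J_{\mathcal{M}}(\pi_{\mathcal{M}}^*)]$ by definition, the gap becomes $\mathbb{E}_{\mathcal{M}\sim\mathcal{T}}[J_{\mathcal{M}}(\pi_{\mathcal{M}}^*) - J_{\mathcal{M}}(\pi)]$, i.e. the task-averaged suboptimality of the fixed policy $\pi$ against the task-specific optimum $\pi_{\mathcal{M}}^*$.

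Next I would invoke the performance difference lemma (Kakade--Langford) inside each task $\mathcal{M}$ with $\pi_{\mathcal{M}}^*$ as the reference policy, giving $J_{\mathcal{M}}(\pi_{\mathcal{M}}^*) - J_{\mathcal{M}}(\pi) = \frac{1}{1-\gamma}\mathbb{E}_{s\sim d_{\mathcal{M},\pi}}\mathbb{E}_{a\sim\pi(\cdot|s)}[\max_{a'}Q_{\mathcal{M}}^*(s,a') - Q_{\mathcal{M}}^*(s,a)]$, where I have used $V_{\mathcal{M}}^*(s) = \max_{a'}Q_{\mathcal{M}}^*(s,a')$. The crucial structural observation, and the step I expect to matter most, is that for a Markovian $\pi \in \Pi_1$ the discounted state occupancy $d_{\mathcal{M},\pi}$ does not depend on the reward and hence is identical across all tasks in $\mathcal{T}$, since the tasks share the dynamics $\mathcal{P}$ and $\pi$ selects actions from the current state alone. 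This is exactly what makes the symbol $d_{\mathcal{M},\pi}$ on the right-hand side well-defined and lets me pull $\mathbb{E}_{\mathcal{M}\sim\mathcal{T}}$ inside past the state expectation.

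With the occupancy fixed, I would swap the order of integration and bound the inner task-and-action expectation pointwise in $s$: $\mathbb{E}_{\mathcal{M}\sim\mathcal{T}}[\max_{a'}Q_{\mathcal{M}}^*(s,a')] - \mathbb{E}_{a\sim\pi(\cdot|s)}\mathbb{E}_{\mathcal{M}\sim\mathcal{T}}[Q_{\mathcal{M}}^*(s,a)] \ge \mathbb{E}_{\mathcal{M}\sim\mathcal{T}}[\max_{a'}Q_{\mathcal{M}}^*(s,a')] - \max_a\mathbb{E}_{\mathcal{M}\sim\mathcal{T}}[Q_{\mathcal{M}}^*(s,a)] = D_{\text{TDR}}(\mathcal{T},s)$, where the inequality replaces the average over $a\sim\pi(\cdot|s)$ of $\mathbb{E}_{\mathcal{M}}Q_{\mathcal{M}}^*$ by its maximum over $a$. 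Pulling the $\frac{1}{1-\gamma}\mathbb{E}_{s\sim d_{\mathcal{M},\pi}}$ back outside then yields the first displayed inequality.

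Finally, to obtain the statement for $J_{\mathcal{T}}^1$ and $J_{\mathcal{T}}^2$, I would appeal to Theorem~\ref{thm-1}: for every $\pi\in\Pi_2$ (hence every $\pi\in\Pi_1$) the task-averaged return equals $J_{\bar{\mathcal{M}}}(\pi)$, so maximizing $\mathbb{E}_{\mathcal{M}\sim\mathcal{T}}[J_{\mathcal{M}}(\pi)]$ over $\Pi_1$ coincides with maximizing $J_{\bar{\mathcal{M}}}$, attained at $\pi^* = \arg\max_{\pi\in\Pi_1}J_{\bar{\mathcal{M}}}(\pi)$, and $J_{\mathcal{T}}^1 = J_{\mathcal{T}}^2 = \mathbb{E}_{\mathcal{M}\sim\mathcal{T}}[J_{\mathcal{M}}(\pi^*)]$. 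Substituting $\pi=\pi^*$ into the first inequality gives the claimed bound on $J_{\mathcal{T}}^* - J_{\mathcal{T}}^1 = J_{\mathcal{T}}^* - J_{\mathcal{T}}^2$. The only genuine subtlety beyond routine manipulation is the reward-independence of the Markovian occupancy measure; everything else is the performance difference lemma together with the Jensen-type replacement of an average by a maximum.
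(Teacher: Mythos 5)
Your proof is correct and follows essentially the same route as the paper's: Appendix B.4 simply re-derives the performance difference lemma from the Bellman equations (following Kakade--Langford) rather than invoking it as a known result, and then performs exactly your remaining steps --- exchanging $\mathbb{E}_{\mathcal{M}\sim\mathcal{T}}$ with $\mathbb{E}_{s\sim d_{\mathcal{M},\pi}}$, bounding the averaged action choice by $\max_a \mathbb{E}_{\mathcal{M}\sim\mathcal{T}}[Q_{\mathcal{M}}^*(s,a)]$ to produce $D_{\text{TDR}}$, and specializing to $\pi^*$ via Theorem~\ref{thm-1}. Your explicit observation that $d_{\mathcal{M},\pi}$ is task-independent for Markovian $\pi$ (shared dynamics, reward-free action selection) is exactly the justification the paper uses implicitly when it swaps the two expectations, so you have filled in, rather than diverged from, the paper's argument.
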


Theorem~\ref{thm-2} demonstrates that the gap between $J_{\mathcal{T}}^1,J_{\mathcal{T}}^2$ and $J_{\mathcal{T}}^*$ is closely related to the TDR of $\mathcal{T}$. 
For task distributions $\mathcal{T}$ with high TDR, i.e., the optimal $Q$ values vary greatly across tasks, the performance of $\Pi_1$ and $\Pi_2$ can be extremely poor since they cannot differentiate between tasks and their expressive abilities are significantly limited. This conclusion is further confirmed empirically in the experimental section. Moreover, we show that $J_{\mathcal{T}}^1, J_{\mathcal{T}}^2$ can be arbitrarily small when $J_{\mathcal{T}}^3$ can be arbitrarily close to $J_{\mathcal{T}}^*$ (see Appendix A.5 for details), demonstrating that $\Pi_3$ possesses stronger expressive ability.
Consequently, it is necessary to utilize policies in $\Pi_3$ to distinguish different tasks and thus improve generalization across the task distribution.

\paragraph{Is optimizing $p_{\theta}(\mathcal{M}|h_t,s_t)$ helpful for task generalization?} Now we will show that, although $p_{\theta}(\mathcal{M}|h_t,s_t)$ is a simple term, optimizing it can be effective for obtaining a generalizable agent in the task distribution setting.

As the input space of $\Pi_3$ is $\mathcal{L}$, i.e., all historical trajectories, our main result analyzes the relationship between a policy $\pi\in\Pi_3$ and the task posterior $p(\mathcal{M}|l), l\in\mathcal{L}$. 
As $h_t$ encodes all historical information, $p_{\theta}(\mathcal{M}|h_t,s_t)$ can equivalently be expressed as $p(\mathcal{M}|l), l\in\mathcal{L}$.

\begin{theorem}[Informal; detailed analyses and proof in Appendix A.6]
\label{thm-4}
For $\forall \pi\in\Pi_3$, we have
\begin{equation}
\begin{split}
\label{eq_pi3}
    J_{\mathcal{T}}^* - \mathbb{E}_{\mathcal{M}\sim\mathcal{T}}\left[J_{\mathcal{M}}(\pi)\right]
    =& \frac{1}{1-\gamma} \int_{\mathcal{L}} p(l) \left[\int  p(\mathcal{M}|l) \max_a Q_{\mathcal{M}}^*(l,a) d\mathcal{M}\right.
    -\left. \int_{a,\mathcal{M}} \pi(a|l)  p(\mathcal{M}|l) Q_{\mathcal{M}}^*(l,a)dad\mathcal{M} \right] dl,
\end{split}
\end{equation}
where $p(l)$ is a distribution of $\mathcal{L}$ related to $\mathcal{T},\pi$ and $p(\mathcal{M}|l)$ is the task posterior related to $\pi$.
\end{theorem}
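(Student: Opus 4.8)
The plan is to reduce the task-distribution gap to a familiar per-step suboptimality sum via a telescoping (value-difference) argument, and then exploit the defining property of $\Pi_3$ — that its actions depend only on the observed history $l\in\mathcal{L}$ — to introduce the task posterior $p(\mathcal{M}|l)$. First I would fix a task $\mathcal{M}$ and a policy $\pi\in\Pi_3$, and use the optimal value $V_{\mathcal{M}}^*(s)=\max_a Q_{\mathcal{M}}^*(s,a)$ as a potential function along trajectories generated by $\pi$ in $\mathcal{M}$. Telescoping $V_{\mathcal{M}}^*(s_0)=\sum_{t\ge0}\gamma^t\big(V_{\mathcal{M}}^*(s_t)-\gamma V_{\mathcal{M}}^*(s_{t+1})\big)$, subtracting the realized return $\sum_t\gamma^t\mathcal{R}_{\mathcal{M}}(s_t,a_t)$, taking expectations, and using the Bellman identity $Q_{\mathcal{M}}^*(s_t,a_t)=\mathcal{R}_{\mathcal{M}}(s_t,a_t)+\gamma\,\mathbb{E}[V_{\mathcal{M}}^*(s_{t+1})]$, yields
\begin{equation}
J_{\mathcal{T}}^*-\mathbb{E}_{\mathcal{M}\sim\mathcal{T}}\!\left[J_{\mathcal{M}}(\pi)\right]=\sum_{t\ge0}\gamma^t\,\mathbb{E}\!\left[\max_a Q_{\mathcal{M}}^*(s_t,a)-Q_{\mathcal{M}}^*(s_t,a_t)\right],
\end{equation}
where the inner expectation is over $\mathcal{M}\sim\mathcal{T}$ and the trajectory $(s_t,a_t)$ induced by $\pi$.

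Next I would rewrite this expectation in terms of histories. The joint law of $(\mathcal{M},l_t,a_t)$ factorizes as $p(\mathcal{M})\,p(l_t|\mathcal{M})\,\pi(a_t|l_t)=p(l_t)\,p(\mathcal{M}|l_t)\,\pi(a_t|l_t)$; the crucial point is that, because a policy in $\Pi_3$ selects $a_t$ from $l_t$ alone, $a_t$ is conditionally independent of the true task $\mathcal{M}$ given $l_t$. Hence each time-$t$ term becomes $\mathbb{E}_{l_t}\big[\,\mathbb{E}_{p(\mathcal{M}|l_t)}\max_a Q_{\mathcal{M}}^*(l_t,a)-\mathbb{E}_{a\sim\pi(\cdot|l_t)}\mathbb{E}_{p(\mathcal{M}|l_t)}Q_{\mathcal{M}}^*(l_t,a)\big]$, where $Q_{\mathcal{M}}^*(l_t,a)=Q_{\mathcal{M}}^*(s_t,a)$ depends on $l_t$ only through its last state. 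Folding the discounted sum over $t$ into a single normalized discounted occupancy $p(l)\propto\sum_t\gamma^t p_t(l)$ over $\mathcal{L}$ then produces the factor $\tfrac{1}{1-\gamma}$ and the integral $\int_{\mathcal{L}}p(l)\,dl\,[\cdots]$ appearing in the statement.

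Finally I would identify the bracketed per-history term. For the relevant (optimal) $\pi\in\Pi_3$, the best the policy can do on history $l$ is act greedily with respect to the posterior-weighted optimal value, so $\mathbb{E}_{a\sim\pi(\cdot|l)}\int p(\mathcal{M}|l)Q_{\mathcal{M}}^*(l,a)d\mathcal{M}=\max_a\int p(\mathcal{M}|l)Q_{\mathcal{M}}^*(l,a)d\mathcal{M}$, turning the term into $\int p(\mathcal{M}|l)\max_a Q_{\mathcal{M}}^*(l,a)d\mathcal{M}-\max_a\int p(\mathcal{M}|l)Q_{\mathcal{M}}^*(l,a)d\mathcal{M}$, exactly Eq.~(\ref{eq_pi3}); this is a history-conditioned analogue of TDR (Definition~\ref{df_tdr}) with the prior $\mathcal{T}$ replaced by the posterior $p(\mathcal{M}|l)$, and it is nonnegative by Jensen's inequality. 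The main obstacle will be the measure-theoretic bookkeeping over $\mathcal{L}$, whose elements have unbounded length: I must verify that the telescoping series converges (bounded rewards and $\gamma<1$ give $\gamma^t V_{\mathcal{M}}^*(s_t)\to0$), define $p(l)$ and $p(\mathcal{M}|l)$ as genuine measures across all history lengths, and justify the conditional-independence factorization rigorously. This decomposition also makes precise why learning the task model $p_{\theta}(\mathcal{M}|h_t,s_t)$ helps: accurately capturing $p(\mathcal{M}|l)$ is exactly what lets the policy realize the greedy-on-posterior action that collapses the gap to its irreducible per-history value.
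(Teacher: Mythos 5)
Your proposal is correct and follows essentially the same route as the paper's Appendix B.6 proof: a performance-difference decomposition over histories (which the paper obtains by lifting to the history MDP $\tilde{\mathcal{M}}$ over $\mathcal{L}$, where $\pi\in\Pi_3$ becomes Markovian, and reusing the Theorem 3 machinery, while you obtain it by telescoping $V_{\mathcal{M}}^*$ along trajectories), followed by the identical key steps of the Bayes swap $p(\mathcal{M})\,p(l|\mathcal{M}) = p(l)\,p(\mathcal{M}|l)$ and folding the discounted sum into a normalized occupancy measure over $\mathcal{L}$. You also correctly flag the subtlety that for arbitrary $\pi\in\Pi_3$ the exact identity involves the policy-averaged term $\int_a \pi(a|l)\int p(\mathcal{M}|l)Q^*_{\mathcal{M}}(l,a)\,d\mathcal{M}\,da$, and the $\max_a$ form holds as an inequality ($\ge$) unless $\pi$ is greedy with respect to the posterior mixture --- which is exactly how the paper's formal appendix version states the result, the main-text equality being the informal rendering.
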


Consequently, maximizing $p(\mathcal{M}|l)$, i.e., making the distribution of $p(\mathcal{M}|l)$ closer to a certain Dirac distribution, can significantly reduce the right part of Eq.~\eqref{eq_pi3}, and thus is effective for improving the generalization ability of $\pi$. Additional details and discussion are provided in Appendix A.6.

\section{Experiments}
\label{sec_expe}

We now present empirical results to answer the following questions:

\begin{itemize}
    \item Can we verify the analyses about TDR, i.e., the expressive abilities of $\Pi_1,\Pi_2$ are severely restricted in task distributions with high TDR? (Sec.~\ref{sec_exp_tdr})
    \item How well does TAD generalize across tasks when handling image-based and state-based observations? (Sec.~\ref{sec_exp_gene})
    \item Can TAD be extended to more general settings, such as dynamics generalization? (Sec.~\ref{sec_exp_abla})
\end{itemize}

\subsection{Experimental setup}
\label{sec_expe_setup}

\paragraph{Image-based control.} To verify our analyses of TDR in Sec.~\ref{theoretical}, we consider several task combinations in the DeepMind Control suite (DMC)~\cite{8}: (1) \textbf{Cartpole-balance\&balance$\_$sparse}, which shares the same optimal actions with TDR$=0$; (2) \textbf{Walker-stand\&walk\&prostrat\&flip}, and \textbf{Cheetah-run\&run$\_$back\&flip\&flip$\_$flip$\_$back}, which are widely used in multi-task unsupervised RL~\cite{39,47} and include task combinations with non-zero TDR. 
For example, Cheetah-run and Cheetah-flip hope a two-legged robot to move forward by running and flip around the torso, respectively (see Fig.~\ref{fig_imagine}), yielding distinct optimal Q-functions and thus leading to huge TDR. Additional details about these task combinations are provided in Appendix C.1.

To evaluate generalization with image-based observations, we extend classic tasks in DMC and design three task distributions: (1) \textbf{Cheetah$\_$speed($\alpha,\ \beta$)}, which extends Cheetah-run and hopes the agent to run within the target speed interval $(\alpha-\beta,\ \alpha+\beta)$; (2) \textbf{Pendulum$\_$angle($\alpha,\ \beta$)}, extending Pendulum-swingup to keep pendulum's pole within the target angle interval $(\arccos\alpha,\ \arccos\beta)$; and (3) \textbf{Walker$\_$speed($\alpha,\ \beta$)}, which is based on Walker-run and requires the planar walker to run within the target speed interval $(\alpha-\beta,\ \alpha+\beta)$.
For each task distribution, we sample $4$ training tasks and $2$ additional test tasks. More details are provided in Appendix C.2.

\paragraph{State-based control.} To demonstrate the scalability of TAD, we also consider several state-based continuous robotic control task distributions simulated via MuJoCo~\cite{48}. Following previous work~\cite{27,29}, 
we select the following task distributions: (1) \textbf{Half-Cheetah-Fwd-Back}, consisting of two opposite tasks; (2) \textbf{Half-Cheetah-Vel} and \textbf{Humanoid-Direc-2D}, each comprising 100 training tasks and 30 test tasks. More details of these task distributions are provided in Appendix D.

\paragraph{Baselines.} In DMC, we choose two model-free methods using Markovian policies $\Pi_1$: \textbf{CURL}~\cite{49} and \textbf{SAC+AE}~\cite{46}. In addition, we include two classic world models, \textbf{PlaNet}~\cite{4} and \textbf{Dreamer}~\cite{5}, and two advanced contrastive learning-based methods, TACO~\cite{50} and TD-MPC~\cite{51}, which utilize historical state-actions in policies and thus belong to $\Pi_2$. Moreover, we evaluate a SOTA model-based meta RL method belonging to $\Pi_3$: \textbf{MAMBA}~\cite{7}.
For state-based control, besides \textbf{PlaNet} and \textbf{Dreamer}, we include meta-RL baselines such as \textbf{MAML}~\cite{27}, \textbf{RL2}~\cite{28}, and \textbf{VariBAD}~\cite{31}, including zero-shot and few-shot evaluation as references.

\paragraph{Metrics.} For the task combinations, we evaluate the average return of all tasks to verify TDR. 
For the task generalization setting, agents are trained on the training tasks and evaluated on the test tasks to assess their generalization performance.
All experiments are repeated with 5 different random seeds, and results are reported as the mean $\pm$ std to mitigate the effects of randomness following previous works~\cite{5,7}.

\begin{table*}[t]
\centering
\tablestyle{3.5pt}{1.1}
\footnotesize
\begin{tabular}{cccccccccccccccc}
\toprule

\multirow{2}*{Algorithm} & \multirow{2}*{Hypothesis} & Cartpole-balance & Walker-stand\&walk  & Cheetah-run\&run$\_$back 
\\ 
& & \&balance$\_$sparse & \&prostrate\&flip & \&flip\&flip$\_$back 
\\
\midrule
CURL
& $\Pi_1$
& \textbf{994.5 $\pm$ 3.6}
& 254.1 $\pm$ 9.2
& 229.7 $\pm$ 10.9 
\\
CURL (w/ r)
& $\Pi_3$
& \textbf{987.3 $\pm$ 12.9}
& 265.0 $\pm$ 4.7
& 236.1 $\pm$ 5.8

\\
SAC+AE
& $\Pi_1$
& \textbf{992.5 $\pm$ 2.6}
& 256.9 $\pm$ 5.9
& 225.8 $\pm$ 10.1 
\\
SAC+AE (w/ r)
& $\Pi_3$
& \textbf{988.5 $\pm$ 7.0}
& 257.2 $\pm$ 8.5
& 239.9 $\pm$ 12.4
\\
PlaNet
& $\Pi_2$
& 309.5 $\pm$ 59.9
& 606.7 $\pm$ 152.9
& 244.8 $\pm$ 17.8 
\\
Dreamer
& $\Pi_2$
& \textbf{974.2 $\pm$ 5.8}
& 722.2 $\pm$ 12.6
& 241.1 $\pm$ 19.5 
\\
TD-MPC
& $\Pi_2$
& 614.0 $\pm$ 8.6
& 609.5 $\pm$ 49.6
& 467.4 $\pm$ 52.6
\\
TACO
& $\Pi_2$
& 740.0 $\pm$ 69.4
& 540.2 $\pm$ 139.2
& 200.6 $\pm$ 49.8
\\
MAMBA
& $\Pi_3$
& \textbf{994.7 $\pm$ 3.2}
& 436.8 $\pm$ 116.1
& 375.6 $\pm$ 44.0 
\\
TAD-CE (Ours)
& $\Pi_3$
& \textbf{998.9 $\pm$ 0.4}
& \textbf{778.9 $\pm$ 63.1}
& 549.6 $\pm$ 28.6 
\\
TAD-SC (Ours)
& $\Pi_3$
& \textbf{982.6 $\pm$ 2.0}
& \textbf{807.8 $\pm$ 85.2}
& \textbf{588.8 $\pm$ 20.2} 
\\
\bottomrule
\end{tabular}
\caption{Performance (mean $\pm$ std) in DMC.
Numbers greater than 95$\%$ of the best performance are \textbf{bold}.}
\label{table_multi_task}
\end{table*}

\begin{table*}[t]
\centering
\tablestyle{3.5pt}{1.1}
\footnotesize
\begin{tabular}{ccccccccc}
\toprule
\multirow{2}*{Algorithms}& \multirow{2}*{Hypothesis} & \multicolumn{2}{c}{Cheetah$\_$speed} & \multicolumn{2}{c}{Pendulum$\_$angle} & \multicolumn{2}{c}{Walker$\_$speed}\\
& & Train & Test & Train & Test & Train & Test \\
\midrule
CURL
& $\Pi_1$
& 211.7 $\pm$ 13.7
& 57.4 $\pm$ 26.6
& 140.2 $\pm$ 1.7
& 46.1 $\pm$ 29.8
& 127.0 $\pm$ 33.7
& 77.5 $\pm$ 11.5\\
SAC+AE
& $\Pi_1$
& 182.2 $\pm$ 7.6
& 115.2 $\pm$ 10.1
& 130.6 $\pm$ 12.2
& 89.0 $\pm$ 25.7
& 136.8 $\pm$ 34.4
& 27.5 $\pm$ 10.5\\
PlaNet
& $\Pi_2$
& 176.6 $\pm$ 25.9
& 83.0 $\pm$ 52.2
& 92.5 $\pm$ 31.3
& 70.6 $\pm$ 18.4
& 173.9 $\pm$ 19.3
& 58.4 $\pm$ 23.7\\
Dreamer
& $\Pi_2$
& 250.2 $\pm$ 9.6
& 3.0 $\pm$ 2.2
& 87.8 $\pm$ 16.1
& 87.3 $\pm$ 20.5
& 197.6 $\pm$ 24.6
& 10.0 $\pm$ 6.5\\
MAMBA
& $\Pi_3$
& 568.0 $\pm$ 229.1
& 475.7 $\pm$ 316.6
& 153.8 $\pm$ 34.7
& 121.1 $\pm$ 29.2
& 104.5 $\pm$ 25.7
& 99.9 $\pm$ 43.1\\
TAD-CE (Ours)
& $\Pi_3$
& \textbf{937.4 $\pm$ 9.8}
& \textbf{909.8 $\pm$ 21.9}
& \textbf{283.9 $\pm$ 16.2}
& \textbf{163.8 $\pm$ 53.0}
& \textbf{241.2 $\pm$ 36.8}
& \textbf{156.5 $\pm$ 129.6}\\
TAD-SC (Ours)
& $\Pi_3$
& \textbf{919.3 $\pm$ 21.9}
& \textbf{906.7 $\pm$ 21.7}
& 204.4 $\pm$ 62.2
& 143.9 $\pm$ 70.7
& 159.0 $\pm$ 36.9
& 104.3 $\pm$ 43.7\\
\bottomrule
\end{tabular}
\caption{Generalization performance (mean $\pm$ std) in DMC.
Numbers greater than 95$\%$ of the best performance are \textbf{bold}.}
\label{table_generalization}
\end{table*}

\begin{table*}[t!]
\centering
\tablestyle{3.5pt}{1.1}
\footnotesize
\begin{tabular}{cccccccccc}
\toprule
\multirow{2}*{Algorithms}& \multirow{2}*{Hypothesis} & Half-Cheetah-Fwd-Back(1e7) & \multicolumn{2}{c}{Half-Cheetah-Vel(1e7)} & \multicolumn{2}{c}{Humanoid-Direc-2D(1e6)}\\
& & Train\&Test & Train & Test & Train & Test\\
\midrule
PlaNet
& $\Pi_2$
& 30.5 $\pm$ 42.9
& -198.1 $\pm$ 1.9 
& -202.1 $\pm$ 1.8 
& 215.9 $\pm$ 72.3
& 220.6 $\pm$ 75.3 \\
Dreamer
& $\Pi_2$
& 127.4 $\pm$ 181.8 
& -151.4 $\pm$ 0.4
& -169.4 $\pm$ 1.2
& 260.5 $\pm$ 48.9
& 263.5 $\pm$ 52.3 \\
RL2(zero-shot)
& $\Pi_3$
& 1070.7 $\pm$ 109.7
& ---
& -70.3 $\pm$ 6.7
& ---
& 191.9 $\pm$ 50.8 \\
RL2(few-shot)
& $\Pi_3$
& 1006.9 $\pm$ 26.4
& ---
& -146.9 $\pm$ 0.4
& ---
& 268.8 $\pm$ 30.2 \\
MAML(few-shot)
& ---
& 429.3 $\pm$ 81.4
& ---
& -121.0 $\pm$ 37.1
& ---
& 205.3 $\pm$ 34.7 \\
VariBAD(zero-shot)
& $\Pi_3$
& 1177.5 $\pm$ 94.9
& ---
& -58.4 $\pm$ 20.6
& ---
& 260.3 $\pm$ 61.6 \\
TAD-CE (Ours)
& $\Pi_3$
& 1455.8 $\pm$ 78.3
& \textbf{-49.3 $\pm$ 1.9}
& \textbf{-47.1 $\pm$ 0.3}
& \textbf{339.5 $\pm$ 78.7}
& \textbf{335.5 $\pm$ 70.5} \\
TAD-SC (Ours)
& $\Pi_3$
& \textbf{1541.5 $\pm$ 114.8}
& -\textbf{50.5 $\pm$ 1.6}
& -\textbf{49.6 $\pm$ 1.6}
& 260.2 $\pm$ 185.0
& 249.0 $\pm$ 168.9\\
\bottomrule
\end{tabular}
\caption{Generalization performance (mean $\pm$ std) in MuJoCo. Numbers greater than 95$\%$ of the best performance are \textbf{bold}. 
}
\label{table_mujoco}
\end{table*}

\subsection{Experimental results for TDR}
\label{sec_exp_tdr}
To validate our analysis of TDR, we report the performance of different algorithms across different task combinations in Table~\ref{table_multi_task}.
The results show that TAD-CE and TAD-SC outperform all baselines, particularly in environments with high TDR.
As illustrated by Theorem~\ref{thm-2}, in task combinations with zero TDR, since different tasks share the same optimal action, policy hypothesis sets $\Pi_1,\Pi_2$ are sufficient to obtain the optimal policy. Therefore, baselines like CURL and Dreamer also perform well in such scenarios, e.g., Cartpole-balance\&balance$\_$sparse.
Conversely, in task combinations where the optimal Q-functions vary significantly across tasks, such as Cheetah-run and Cheetah-run$\_$back, the TDR is significantly huge. In these cases, methods with policies in $\Pi_1,\Pi_2$ (e.g., CURL and Dreamer) cannot differentiate between tasks and perform poorly. Consistent with Theorem~\ref{thm-2}, baselines leveraging $\Pi_3$ (e.g., MAMBA) outperform other baselines, and TAD further improves performance substantially.

Moreover, we provide qualitative visualizations to further illustrate how TAD operates.
Fig.~\ref{fig_imagine} presents future predictions generated by TAD for two tasks (Cheetah-run and Cheetah-flip). Using the same agent trained with TAD, we collect trajectories for both tasks and display them in the first and third rows, respectively. Given the first 7 steps as context, where the observations are visually similar while the rewards differ, we roll out the next 55 steps purely in imagination using the trained RIWM. The corresponding predicted trajectories are shown in the second and fourth rows of Figure~\ref{fig_imagine}. As observed, TAD produces distinct predictions across tasks solely based on the reward signals, indicating that the agent successfully infers task context from historical information. In addition, the accurate long-horizon predictions and high-quality reconstructions produced by RIWM highlight its potential as a foundation for training larger-scale world models.

In Figure~\ref{fig_walker}, we further visualize the latent states of world models trained with Dreamer, TAD-CE, and TAD-SC. Trajectories from different tasks are embedded into a two-dimensional space using t-SNE~\cite{52}. The latent representations from TAD-CE and TAD-SC form clearly separated clusters across tasks, whereas Dreamer produces overlapping embeddings and fails to distinguish task context. This visualization confirms that TAD effectively learns task-aware representations and differentiates between tasks at the representation level.

\begin{figure*}[t]
\centering
\includegraphics[width=0.9\linewidth]{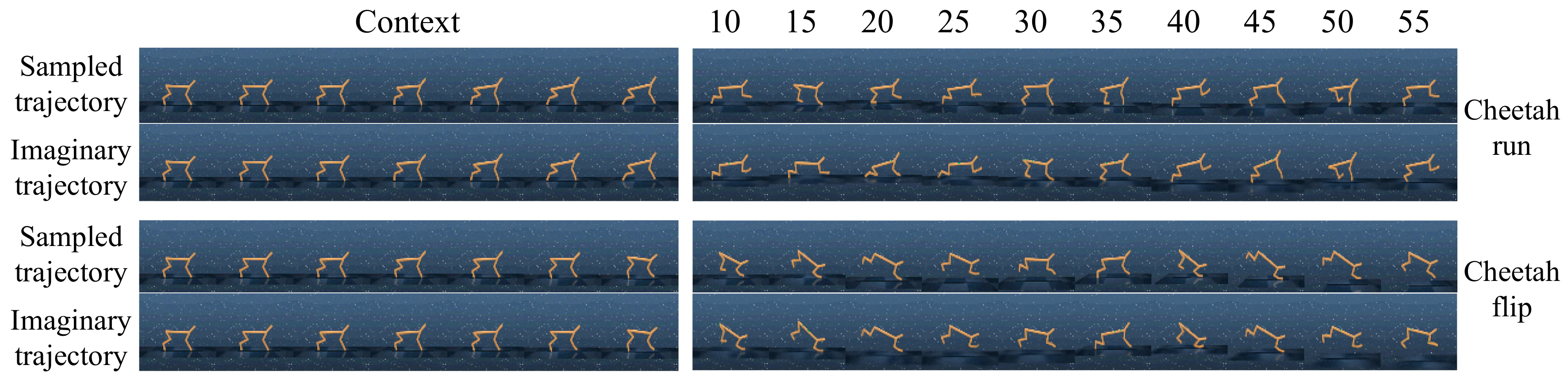}
\caption{
Sampled trajectories and imaginary trajectories of TAD for different tasks (Cheetah-run and Cheetah-flip).
}
\label{fig_imagine}
\end{figure*}

\subsection{Experimental results for task generalization}
\label{sec_exp_gene}

To address the second question, we report task generalization results on both image-based and state-based environments in Table~\ref{table_generalization}–\ref{table_mujoco}.
In Table~\ref{table_generalization}, TAD exhibits stronger generalization capability than all baselines, indicating that it can simultaneously handle multiple training tasks and effectively generalize to unseen test tasks. Additional qualitative visualizations are provided in Appendix C.3.

Furthermore, Table~\ref{table_mujoco} compares TAD with existing meta-RL methods in state-based environments, where the training horizons are $1e7$, $1e7$, and $1e6$, respectively. Many context-based meta-RL methods, such as RL2, PEARL, and VariBAD, utilize historical rewards and thus belong to $\Pi_3$. As a result, they are capable of distinguishing different tasks, which is consistent with our theoretical analysis.
As shown in Table~\ref{table_mujoco}, TAD achieves substantial performance gains on both training and test tasks, demonstrating that leveraging task-aware representations enables more effective generalization to unseen tasks.

\subsection{Ablation study}
\label{sec_exp_abla}

\paragraph{Reward signals.} In Fig.~\ref{fig_ablation}, we present ablation studies on the task-context term in TAD. In particular, we evaluate Dreamer(w/ r), which augments Dreamer to $\Pi_3$ by incorporating reward conditioning. The results show that simply providing reward signals can help Dreamer distinguish different tasks to some degree, which is consistent with our analysis in Sec.~\ref{theoretical}. Nevertheless, TAD-CE and TAD-SC achieve notably stronger performance, indicating that the proposed RIWM and the corresponding optimization method further enhance task generalization ability.

We also conduct ablations on reward signals for model-free methods such as CURL and SAC+AE, both of which belong to $\Pi_1$. Specifically, we design CURL (w/ r) and SAC+AE (w/ r) by directly appending the reward to the observation at each timestep. As shown in Table~\ref{table_multi_task}, their performance remains comparable to the original CURL and SAC+AE implementation. This is because these variants only use the reward and observation from the current timestep, without leveraging historical information, and thus still fail to effectively distinguish between different tasks.

\begin{figure}[t]
\centering
\subfloat[T-SNE visualization.\label{fig_walker}]{
    \centering
    \includegraphics[width=0.48\linewidth]{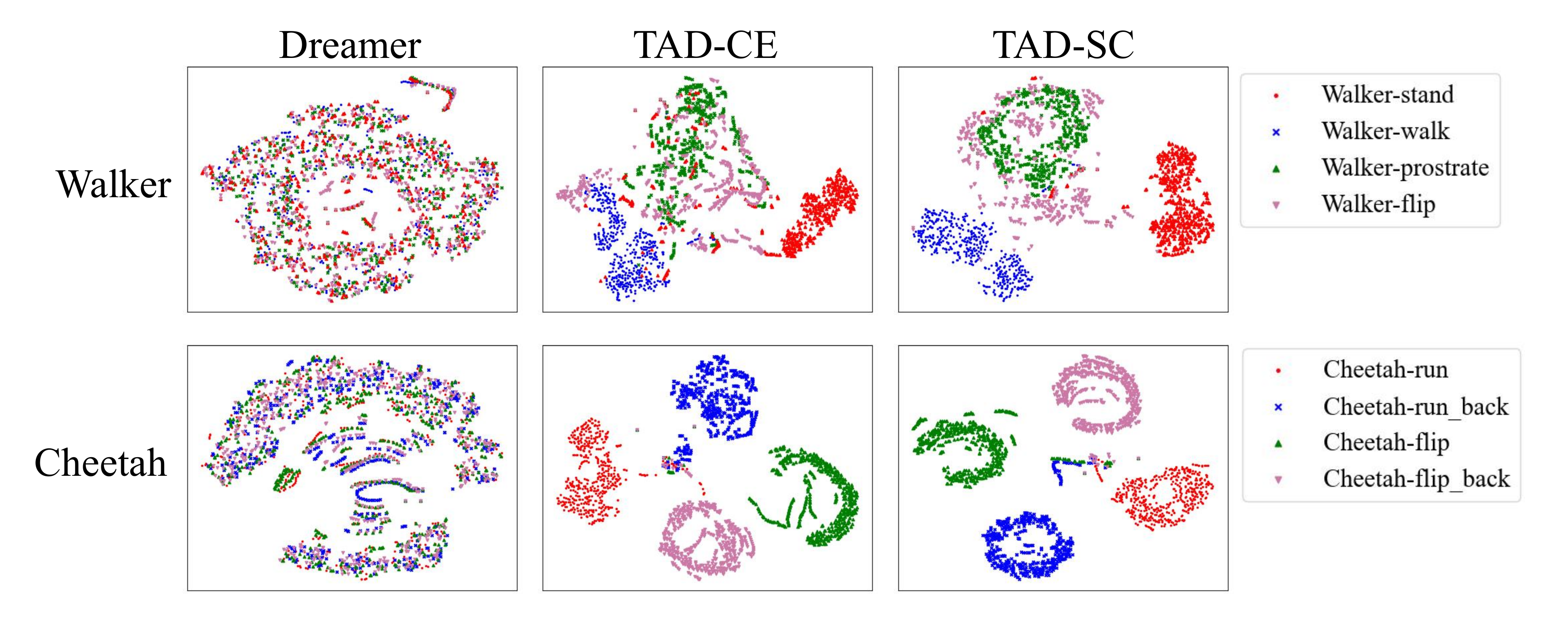}
}
\subfloat[Ablation study on Reward Signal.\label{fig_ablation}]{
    \centering
    \includegraphics[width=0.48\linewidth]{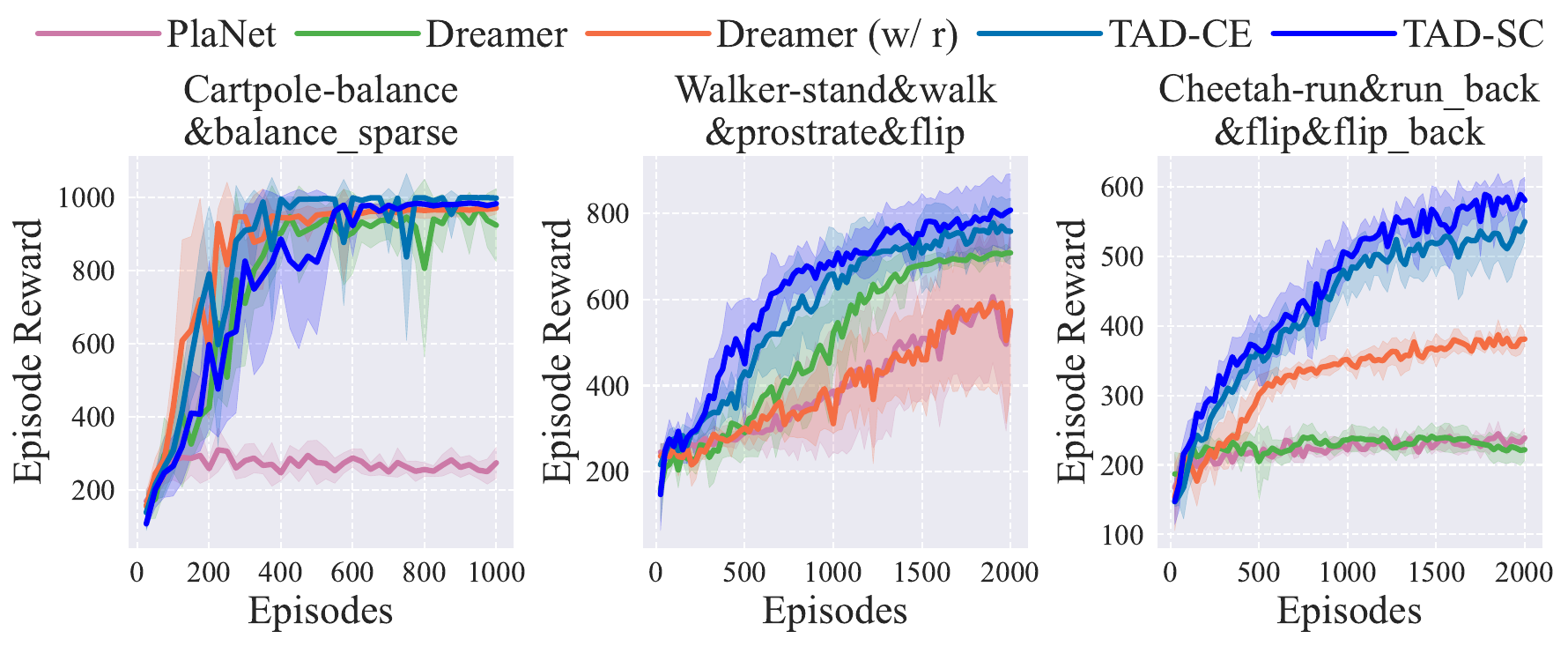}
}
\caption{(a) The t-SNE clustering of state embeddings for different tasks via Dreamer, TAD-CE, and TAD-SC.
(b) The ablation study on the reward signals.
}
\end{figure}

\paragraph{Extension to dynamics generalization.} To address the third question, we evaluate TAD in more general settings involving variations in observations, dynamics, and/or action spaces. Since TAD infers task information with all historical information, it can be directly applied to these settings without modification. We construct task distributions with different embodiments (Acrobot-Cartpole-Pendulum, Walker-Cheetah-Hopper), as well as different dynamics (Cheetah-run$\_$mass, Walker-walk$\_$mass).
More details and results are provided in Appendix E.2, where TAD demonstrates substantially stronger performance than the baselines, highlighting its potential for dynamics generalization and even cross-embodiment adaptation.

\subsection{Limitations and discussion}
Regarding limitations, TAD assumes that task context can be inferred from historical interaction data in order to generalize to unseen tasks.
Consequently, its effectiveness may be reduced in environments with extremely sparse rewards. In Appendix A.7, we further show that without extra prior knowledge or fine-tuning, zero-shot generalization to unseen tasks with extremely sparse rewards is fundamentally infeasible, as the agent lacks sufficient information to distinguish among tasks.
Nevertheless, in relatively sparse-reward settings, our experiments in Appendix E.3 demonstrate that TAD can still infer the current task and generalize effectively to unseen tasks.

\section{Conclusion}
In this work, we propose Task-Aware Dreamer (TAD), a novel framework that leverages full historical information to handle multiple tasks and employs Reward-Informed World Models (RIWM) to capture task-specific latent features. In TAD, we derive a variational lower bound on the data log-likelihood that incorporates a task context term, encouraging the world model to learn task-specific representations.
To justify the necessity of TAD’s components, we first introduce the Task Distribution Relevance (TDR) metric to characterize the relevance of different tasks from the task distribution. We then theoretically show that Markovian policies perform poorly in task distributions with high TDR. Extensive experiments in both image-based and state-based settings demonstrate that TAD substantially improves performance in the task distribution setting, particularly for task distributions with high TDR, and successfully generalizes to unseen tasks in a zero-shot manner.

\Acknowledgements{
This work was supported by the NSFC Projects
 (Nos. 92370124, 92248303, U25B6003,  62350080), Beijing Natural Science Foundation L247011, Tsinghua Institute for Guo Qiang, and the High Performance
 Computing Center, Tsinghua University. 
}


\begin{appendix}

\section{Proof of theorems}
In this section, we will provide detailed proofs of theorems.

\subsection{The proof of Theorem 1}

\begin{proof}
First, we will show that $\mathcal{H}_2 \subseteq \mathcal{H}_1$, following the proof in~\cite{39} that considers the setting where the rewards of all environments are the same.
\begin{equation}
\begin{split}
    \{q_m\} \in \mathcal{H}_2 
    \Leftrightarrow &\exists (p_m)_{m=1}^M: p_m(o_t^m, a_t^m) = o_{t+1}^m,\\
    & \exists (r_m)_{m=1}^M: r_m(o_t^m, a_t^m) = r_{t}^m, 
    q_m (o, a) = r_m(o,a) + \gamma \max_{a'} q_m (p_m(o, a), a'), \forall o,a  \\
    \Rightarrow & \exists (r_m)_{m=1}^M:: p_m(o_t^m, a_t^m) = o_{t+1}^m, \\
    & \exists (r_m)_{m=1}^M: r_m(o_t^m, a_t^m) = r_{t}^m, 
    q_m (o_t^m, a_t^m) = r^m_t + \gamma \max_{a'} q_m (p_m(o_t^m, a_t^m), a'), \forall m, t \\
    \Rightarrow & q_m (o_t^m, a_t^m) = r^m_t + \gamma \max_{a'} q_m (o_{t+1}^m, a'), \forall m, t \\
    \Leftrightarrow & \{q_m\} \in \mathcal{H}_1,
\end{split}
\end{equation}
thus we have $\mathcal{H}_2\subseteq\mathcal{H}_1$. Next, we prove that $\mathcal{H}_3\subseteq\mathcal{H}_2$, which is mainly because we can utilize similar dynamic structures from different tasks to narrow down the hypothesis spaces of the dynamic model.
\begin{equation}
\begin{split}
    \{q_m\} \in \mathcal{H}_3
    \Leftrightarrow & \exists p: p(o_t^m, a_t^m) = o_{t+1}^m, \forall m,t \\
    & \exists (r_m)_{m=1}^M: r_m(o_t^m, a_t^m) = r_{t}^m, 
    q_m (o, a) = r_m(o, a) + \gamma \max_{a'} q_m (p(o, a), a'), \forall m,o,a \\
    \Rightarrow & \exists (p_m)_{m=1}^M : p_m(o_t^m, a_t^m) = o_{t+1}^m, \forall m,t \\
    & \exists (r_m)_{m=1}^M: r_m(o_t^m, a_t^m) = r_{t}^m,
    q_m (o, a) = r_m(o, a) + \gamma \max_{a'} q_m (p_m(o, a), a'), \forall m,o,a \\
    \Leftrightarrow & \{q_m\} \in \mathcal{H}_2,
\end{split}
\end{equation}
thus we have $\mathcal{H}_3\subseteq\mathcal{H}_2$.
\end{proof}

\subsection{The derivation of the ELBO}
\label{pf_elbo}
We use $\textcolor{mydarkgreen}{q}$ to represent $q(s_{1:T}|o_{1:T},a_{1:T},r_{1:T})$, ${\hat{q}}$ to present $q(s_{t}|o_{\leq t},a_{< t},r_{< t})$, $\textcolor{red}{\tilde{q}}$ to represent $q(s_{t-1}|o_{\leq (t-1)}, r_{< (t-1)}, a_{< (t-1)})$, and we have
\begin{equation}
\begin{split}
    & \ln p(o_{1:T},r_{1:T}, \mathcal{M} |a_{1:T}) \\
    = &\ln \mathbb{E}_{p(s_{1:T}|a_{1:T})} \left[p(o_{1:T},r_{1:T}, \mathcal{M} |s_{1:T})\right]\\
    = &\ln \mathbb{E}_{p(s_{1:T}|a_{1:T})} \left[p(o_{1:T},r_{1:T}|\mathcal{M}, s_{1:T}) p( \mathcal{M} | s_{1:T})\right]\\
    = &\ln \mathbb{E}_{p(s_{1:T}|a_{1:T})}\left[p( \mathcal{M} | s_{1:T}) \prod_{t=1}^T p(o_t,r_t|\mathcal{M}, s_t)\right]\\
    = &\ln \mathbb{E}_{\textcolor{mydarkgreen}{q}}\left[p( \mathcal{M} | s_{1:T}) \prod_{t=1}^T p(o_t,r_t|\mathcal{M},s_t)\frac{p(s_t|s_{t-1}, a_{t-1})}{q(s_t|o_{\leq t}, r_{< t}, a_{< t})}\right] \\
    \ge & \mathbb{E}_{\textcolor{mydarkgreen}{q}} [\ln p( \mathcal{M} | s_{1:T}) + \sum_{t=1}^T \ln p(o_t,r_t|\mathcal{M},s_t)]
    + \mathbb{E}_{\textcolor{mydarkgreen}{q}}\sum_{t=1}^T\left[ \ln  p(s_t|s_{t-1}, a_{t-1})- \ln  q(s_t|o_{\leq t}, r_{< t}, a_{< t})\right]\\
    = & \mathbb{E}_{\textcolor{mydarkgreen}{q}} [\ln p( \mathcal{M} | s_{1:T})]
    + \sum_{t=1}^T\left[  \mathbb{E}_{\textcolor{mydarkgreen}{q}}\ln p(o_t,r_t|\mathcal{M},s_t) - \mathbb{E}_{\textcolor{mydarkgreen}{q}} \ln \frac{q(s_t|o_{\leq t}, r_{< t}, a_{< t})}{ p(s_t|s_{t-1}, a_{t-1})} \right]\\
    = &\mathbb{E}_{\textcolor{mydarkgreen}{q}} [\ln p( \mathcal{M} | s_{1:T})] + \sum_{t=1}^T \mathbb{E}_{{\hat{q}}} [\ln p(o_t,r_t|\mathcal{M},s_t)]
    - \sum_{t=1}^T \mathbb{E}_{{\hat{q}} \textcolor{red}{\tilde{q}}} \left[\ln \frac{q(s_t|o_{\leq t}, r_{< t}, a_{< t})}{ p(s_t|s_{t-1}, a_{t-1})}\right] \\
    = &\mathbb{E}_{\textcolor{mydarkgreen}{q}}[\ln p( \mathcal{M} | s_{1:T})] + \sum_{t=1}^T \mathbb{E}_{{\hat{q}}}[\ln p(o_t,r_t|\mathcal{M},s_t)]
    - \sum_{t=1}^T \mathbb{E}_{\textcolor{red}{\tilde{q}}} [\mathrm{KL} \left(q(s_t|o_{\leq t}, r_{< t}, a_{< t})\| p(s_t|s_{t-1}, a_{t-1})\right)].
\end{split}
\end{equation}

Thus, we have proven the ELBO.

\subsection{The proof of Theorem 2}
\label{pf_thm1}
\begin{proof}
We first prove that for $\forall \pi\in\Pi_2$, we have $\mathbb{E}_{\mathcal{M}\sim\mathcal{T}} [J_{\mathcal{M}}(\pi)] = J_{\bar{\mathcal{M}}}(\pi)$.

For any $\mathcal{M}=(\mathcal{S}, \mathcal{A}, \mathcal{P}, \mathcal{R}_{\mathcal{M}}, \gamma)\sim\mathcal{T}$, we can use the policy $\pi$ to interact with $\mathcal{M}$ and get the trajectory $\tau=(s_0^{\mathcal{M}}, a_0^{\mathcal{M}}, r_1^{\mathcal{M}}, s_1^{\mathcal{M}}, a_1^{\mathcal{M}}, r_2^{\mathcal{M}}, ...)$. Since the dynamic transition $\mathcal{P}$ is the same for all $\mathcal{M}$ and the policy $\pi\in\Pi_2$ only depends on historical states and actions, we naturally have that the distribution of all states and actions $(s_0^{\mathcal{M}}, a_0^{\mathcal{M}}, s_1^{\mathcal{M}}, a_1^{\mathcal{M}}, ...)$ are the same for all $\mathcal{M}\sim\mathcal{T}$ as well as $\bar{\mathcal{M}}$. Consequently, we have 
\begin{equation}
\begin{split}
    \mathbb{E}_{\mathcal{M}\sim\mathcal{T}} [J_{\mathcal{M}}(\pi)] 
    =& \mathbb{E}_{\mathcal{M}\sim\mathcal{T}} \mathbb{E}_{\tau\sim \mathcal{P},\pi} [R_{\mathcal{M}}(\tau)] 
    = \mathbb{E}_{\mathcal{M}\sim\mathcal{T}} \mathbb{E}_{\tau\sim \mathcal{P},\pi} \left[\sum_{t=0}^{\infty} \gamma^t r_t^{\mathcal{M}}\right] 
    = \mathbb{E}_{\tau\sim \mathcal{P},\pi} \left[\sum_{t=0}^{\infty} \gamma^t \mathbb{E}_{\mathcal{M}\sim\mathcal{T}} [r_t^{\mathcal{M}}]\right] \\
    =& \mathbb{E}_{\tau\sim \mathcal{P},\pi} \left[\sum_{t=0}^{\infty} \gamma^t \mathbb{E}_{\mathcal{M}\sim\mathcal{T}} [\mathcal{R}_{\mathcal{M}}(s_t^{\mathcal{M}}, a_t^{\mathcal{M}})]\right] 
    = \mathbb{E}_{\tau\sim \mathcal{P},\pi} \left[\sum_{t=0}^{\infty} \gamma^t [\bar{\mathcal{R}}(s_t^{\mathcal{M}}, a_t^{\mathcal{M}})]\right] \\
    =& \mathbb{E}_{\tau\sim \mathcal{P},\pi} [R_{\bar{\mathcal{M}}}(\tau)]
    = J_{\bar{\mathcal{M}}}(\pi).
\end{split}
\end{equation}
It is well known that the optimal policy in single MDP is memory-less, i.e.,  $\max_{\pi\in\Pi_2} J_{\bar{\mathcal{M}}}(\pi) = \max_{\pi\in\Pi_1} J_{\bar{\mathcal{M}}}(\pi)$. Consequently, we have
\begin{equation}
\begin{split}
    J_{\mathcal{T}}^2 &= \max_{\pi\in\Pi_2}\mathbb{E}_{\mathcal{M}\sim\mathcal{T}} [J_{\mathcal{M}}(\pi)] = \max_{\pi\in\Pi_2}  J_{\bar{\mathcal{M}}}(\pi)
    = \max_{\pi\in\Pi_1} J_{\bar{\mathcal{M}}}(\pi) = \max_{\pi\in\Pi_1}\mathbb{E}_{\mathcal{M}\sim\mathcal{T}} [J_{\mathcal{M}}(\pi)] = J_{\mathcal{T}}^1 \\
    &\leq \mathbb{E}_{\mathcal{M}\sim\mathcal{T}} \left[\max_{\pi\in\Pi_1} J_{\mathcal{M}}(\pi)\right] = J_{\mathcal{T}}^*.
\end{split}
\end{equation}
Thus we have proven this result.
\end{proof}

\subsection{The Proof of Theorem 3}
\label{pf_thm2}

\begin{proof}
Our proof follows some previous work~\cite{53,54}. First, we consider the bellman equation of the value function of $\pi,\pi_{\mathcal{M}}^*\in\Pi_1$ in $\mathcal{M}$ as
\begingroup
\small
\begin{equation*}
\begin{split}
V_{\mathcal{M}, \pi}(s) =& \sum_a \pi(a|s)\left[\mathcal{R}(s, a) + \gamma \sum_{s'}\mathcal{P}(s'|s, a)V_{\mathcal{M}, \pi}(s')\right],\\
    V_{\mathcal{M}, \pi_{\mathcal{M}}^*}(s) 
    =& \sum_a \pi_{\mathcal{M}}^*(a|s)\left[\mathcal{R}(s, a) + \gamma \sum_{s'}\mathcal{P}(s'|s, a)V_{\mathcal{M}, \pi_{\mathcal{M}}^*}(s')\right].
\end{split}
\end{equation*}
\endgroup
Defining $\Delta V(s) \triangleq V_{\mathcal{M}, \pi}(s) - V_{\mathcal{M}, \pi_{\mathcal{M}}^*}(s)$ as the difference of these two value functions, we can further deduce that
\begin{equation}
\begin{split}
\label{15}
    &V_{\mathcal{M}, \pi}(s) - V_{\mathcal{M}, \pi_{\mathcal{M}}^*}(s) \\
    =& \gamma\sum_a \Delta\pi(a|s) \sum_{s'}\mathcal{P}(s'|s, a)V_{\mathcal{M}, \pi_{\mathcal{M}}^*}(s') 
    + \gamma\sum_a \pi(a|s) \sum_{s'}\mathcal{P}(s'|s, a)\Delta V(s') 
    + \sum_a \Delta\pi(a|s)\mathcal{R}(s, a)\\
    =& \sum_a \Delta\pi(a|s) Q_{\mathcal{M}, \pi_{\mathcal{M}}^*}(s,a) 
    + \gamma\sum_a \pi(a|s) \sum_{s'}\mathcal{P}(s'|s, a)\Delta V(s'),
\end{split}
\end{equation}
here $\Delta\pi(a|s)= \pi(a|s) - \pi_{\mathcal{M}}^*(a|s)$. Since Eq.~\eqref{15} holds for any $s$, thus we calculate its expectation for $s\sim d_{\mathcal{M}}^{\pi_{\mathcal{M}}^*}$:
\begin{equation}
\begin{split}
    \sum_s d_{\mathcal{M}}^{\pi}(s) \Delta V(s) 
    =&\sum_s d_{\mathcal{M}}^{\pi}(s) [V_{\mathcal{M}, \pi}(s)-V_{\mathcal{M}, \pi_{\mathcal{M}}^*}(s)]\\
    =& \sum_s d_{\mathcal{M}}^{\pi}(s)\sum_a \Delta\pi(a|s) Q_{\mathcal{M}, \pi_{\mathcal{M}}^*}(s,a)
    + \gamma\sum_s d_{\mathcal{M}}^{\pi}(s)\sum_a \pi(a|s)\sum_{s'}\mathcal{P}(s'|s, a)\Delta V(s')\\
    =& \sum_s d_{\mathcal{M}}^{\pi}(s)\sum_a \Delta\pi(a|s) Q_{\mathcal{M}, \pi_{\mathcal{M}}^*}(s,a)
    + \sum_{s'}\Delta V(s')\left[\gamma\sum_s d_{\mathcal{M}}^{\pi}(s)\sum_a \pi(a|s) \mathcal{P}(s'|s, a)\right].
\end{split}
\end{equation}
Since 
$\gamma\sum_{s'}d_{\mathcal{M}}^{\pi}(s')\sum_{a}\pi(a|s')\mathcal{P}(s|s', a) = d_{\mathcal{M}}^{\pi}(s) - (1-\gamma)\mathcal{P}(s_0=s)$,
we have
\begin{equation}
\begin{split}
\label{16}
    \sum_s d_{\mathcal{M}}^{\pi}(s) &\Delta V(s)
    = \sum_s d_{\mathcal{M}}^{\pi}(s)\sum_a \Delta\pi(a|s) Q_{\mathcal{M}, \pi_{\mathcal{M}}^*}(s,a) 
    + \sum_{s'}\Delta V(s') \left[d_{\mathcal{M}}^{\pi}(s') - (1-\gamma)\mathcal{P}(s_0=s')\right].
\end{split}
\end{equation}
By moving the second term of the right part in Eq.~\eqref{16} to the left part, we can deduce that
\begin{equation}
\begin{split}
\label{app_eq_10}
    (1-\gamma)\sum_{s'}\Delta V(s') \mathcal{P}(s_0=s') 
    =&  \sum_s d_{\mathcal{M}}^{\pi}(s)\sum_a \Delta\pi(a|s) Q_{\mathcal{M}, \pi_{\mathcal{M}}^*}(s,a),
\end{split}
\end{equation}
thus we can calculate that
\begin{equation}
\begin{split}
\label{app_eq_11}
    J_{\mathcal{M}}(\pi) - J_{\mathcal{M}}(\pi_{\mathcal{M}}^*) 
    = & \sum_{s'}\Delta V(s') \mathcal{P}(s_0=s')
    = \frac{1}{1-\gamma} \sum_s d_{\mathcal{M}}^{\pi}(s)\sum_a \Delta\pi(a|s) Q_{\mathcal{M}, \pi_{\mathcal{M}}^*}(s,a)\\
    =& \frac{1}{1-\gamma} \sum_s d_{\mathcal{M}}^{\pi}(s)\sum_a [\pi(a|s) - \pi_{\mathcal{M}}^*(a|s)] Q_{\mathcal{M}, \pi_{\mathcal{M}}^*}(s,a)\\
    =& \frac{1}{1-\gamma} \mathbb{E}_{s\sim d_{\mathcal{M}}^{\pi}} \mathbb{E}_{a\sim\pi(\cdot|s)} \left(1- \frac{\pi_{\mathcal{M}}^*(a|s)}{\pi(a|s)}\right) Q_{\mathcal{M}, \pi_{\mathcal{M}}^*}(s,a)\\
    =& \frac{1}{1-\gamma} \mathbb{E}_{s\sim d_{\mathcal{M}}^{\pi}} \int_{\mathcal{A}}\pi(a|s) \left(1- \frac{\pi_{\mathcal{M}}^*(a|s)}{\pi(a|s)}\right) Q_{\mathcal{M}}^*(s, a) \mathrm{d} a\\
    =& \frac{1}{1-\gamma} \mathbb{E}_{s\sim d_{\mathcal{M}}^{\pi}} \left[\int_{a}\pi(a|s)Q_{\mathcal{M}}^*(s,a)da - \max_a Q_{\mathcal{M}}^*(s,a)\right].
\end{split}
\end{equation}
Consequently, we have
\begin{equation}
\begin{split}
\label{app_eq_12}
    \mathbb{E}_{\mathcal{M}\sim\mathcal{T}}\left[J_{\mathcal{M}}(\pi_{\mathcal{M}}^*) - J_{\mathcal{M}}(\pi)\right] 
    =& \frac{1}{1-\gamma}\mathbb{E}_{\mathcal{M}\sim\mathcal{T}} \mathbb{E}_{s\sim d_{\mathcal{M},\pi}(\cdot)} \left[\max_a Q_{\mathcal{M}}^*(s,a) - \int_{a}\pi(a|s)Q_{\mathcal{M}}^*(s,a)da\right]\\
    \ge & \frac{1}{1-\gamma}\mathbb{E}_{s\sim d_{\mathcal{M},\pi}(\cdot)} \left[\mathbb{E}_{\mathcal{M}\sim\mathcal{T}}\max_a Q_{\mathcal{M}}^*(s,a)  - \max_{a}\mathbb{E}_{\mathcal{M}\sim\mathcal{T}}Q_{\mathcal{M}}^*(s,a)\right] \\
    = & \frac{1}{1-\gamma}\mathbb{E}_{s\sim d_{\mathcal{M},\pi}} \left[D_{\text{TDR}}(\mathcal{T}, s)\right].
\end{split}
\end{equation}
Since $J_{\mathcal{M}}(\pi_{\mathcal{M}}^*) = \max_{\pi\in\Pi_1} J_{\mathcal{M}}(\pi)$, we have
\begin{equation}
\begin{split}
    \mathbb{E}_{\mathcal{M}\sim\mathcal{T}} \max_{\pi\in\Pi_1} J_{\mathcal{M}}(\pi) - \max_{\pi\in\Pi_1} \mathbb{E}_{\mathcal{M}\sim\mathcal{T}}  J_{\mathcal{M}}(\pi) 
    \ge& \frac{1}{1-\gamma}\mathbb{E}_{s\sim d_{\mathcal{M},\pi^*}} \left[D_{\text{TDR}}(\mathcal{T}, s)\right],
\end{split}
\end{equation}

Thus we have proven this result.
\end{proof}

\subsection{Expressive ability of $\Pi_3$}
\begin{proposition}
For $\forall\epsilon_1,\epsilon_2$ satisfying $0<\epsilon_1\leq 1,0<\epsilon_2\leq 1$, there exists a task distribution $\mathcal{T}$ satisfying that
\begin{equation}
\begin{split}
    J_{\mathcal{T}}^1=J_{\mathcal{T}}^2 \leq \epsilon_1,~~J_{\mathcal{T}}^3\ge1-\epsilon_2,~~J_{\mathcal{T}}^*=1.
\end{split}
\end{equation}
\end{proposition}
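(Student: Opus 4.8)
The plan is to exhibit a small two-stage MDP family whose averaged reward washes out every decision, so that $\Pi_1,\Pi_2$ are forced to be optimal only in a ``flat'' average MDP, while a reward-aware policy in $\Pi_3$ can read off the task from an early reward and then act optimally. First I would invoke Theorem~\ref{thm-1}, which reduces $J_{\mathcal{T}}^1=J_{\mathcal{T}}^2$ to $\max_{\pi\in\Pi_1}J_{\bar{\mathcal{M}}}(\pi)$, the best Markovian return in the averaged MDP $\bar{\mathcal{M}}$ with reward $\bar{\mathcal{R}}=\mathbb{E}_{\mathcal{M}\sim\mathcal{T}}[\mathcal{R}_{\mathcal{M}}]$. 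It then suffices to build $\mathcal{T}$ so that $\bar{\mathcal{M}}$ has small optimal value while each individual task has optimal value $1$ and is identifiable from its rewards.

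Concretely, I would take two equiprobable tasks sharing a deterministic dynamic on states $\{s_0,s_1,s_\infty\}$: at $s_0$ a single (dummy) action emits a task-dependent probe reward $\rho_i$ and moves to $s_1$; at $s_1$ two actions $a_1,a_2$ yield swapped rewards, $\mathcal{R}_1(s_1,a_1)=\mathcal{R}_2(s_1,a_2)=u$ and $\mathcal{R}_1(s_1,a_2)=\mathcal{R}_2(s_1,a_1)=w$ with $u>w$, and both send the agent to the absorbing zero-reward state $s_\infty$. I would pick $\rho_1=-\eta,\rho_2=\eta$ with $\eta>0$ small (so $\bar\rho=0$) and $u=1/\gamma$ (so one optimal discounted payoff equals $1$). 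Then $J_{\mathcal{T}}^*=\tfrac12\sum_i(\rho_i+\gamma u)=\gamma u=1$, and a $\Pi_3$ policy, which at $t=1$ observes $(s_0,a_0,\rho_i,s_1)$ and hence learns $i$ from $\rho_i\ne\rho_{3-i}$, selects the $u$-action in each task and attains $J_{\mathcal{T}}^3=1\ge 1-\epsilon_2$.

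For the $\Pi_1,\Pi_2$ bound I would observe that in $\bar{\mathcal{M}}$ both actions at $s_1$ carry the same averaged reward $\tfrac12(u+w)$ and the same transition, so every policy is optimal there and $J_{\mathcal{T}}^1=J_{\mathcal{T}}^2=\bar\rho+\gamma\cdot\tfrac12(u+w)=\tfrac12(1+\gamma w)$; choosing $w$ (allowed negative) with $\tfrac12(1+\gamma w)\le\epsilon_1$, e.g.\ $w=-1/\gamma$, gives $J_{\mathcal{T}}^1=J_{\mathcal{T}}^2\le\epsilon_1$.

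I expect the main obstacle to be making precise the information asymmetry that separates $\Pi_3$ from $\Pi_2$: I must verify that, because the transition kernel is identical across tasks, a $\Pi_2$ policy — conditioning only on past states and actions — genuinely cannot tell the two tasks apart and is therefore pinned to the averaged-MDP value supplied by Theorem~\ref{thm-1}, whereas the reward realization $\rho_i$ (invisible to $\Pi_2$ but visible to $\Pi_3$) arrives strictly before the payoff decision at $s_1$, so $\Pi_3$ can act on it. A secondary point is consistency of the normalization: driving $J_{\mathcal{T}}^1$ toward $0$ via negative $w$ must not disturb $J_{\mathcal{T}}^*=1$, which holds because $w$ never enters any task's optimal value. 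Finally, if one wants $J_{\mathcal{T}}^3$ to land strictly in $[1-\epsilon_2,1)$ rather than equal $1$, I would replace the deterministic probe by overlapping per-task reward distributions and tune the overlap so that imperfect identification costs a $\gamma$-discounted amount of at most $\epsilon_2$.
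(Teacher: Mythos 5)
Your proof is correct, but it takes a genuinely different route from the paper's. The paper constructs $n \geq 1/\epsilon_1$ tasks under the uniform distribution, with a two-phase reward structure: a length-$n$ probing phase with small per-step reward $A$ (earned only by the task-matching action $a_i$ in task $\mathcal{M}_i$), followed by an exploitation phase with large reward $B$. There, $\Pi_1,\Pi_2$ are limited by \emph{dilution} --- the averaged reward is $f(t)/n$, giving $J_{\mathcal{T}}^1=J_{\mathcal{T}}^2=1/n\leq\epsilon_1$ --- and the $\Pi_3$ agent must spend $n$ steps probing before identifying the task, a cost calibrated via $A$ to be at most $\epsilon_2$. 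You instead use only two equiprobable tasks and achieve the separation by \emph{sign cancellation}: the probe rewards $\pm\eta$ average to zero and the swapped payoffs $u=1/\gamma$, $w=-1/\gamma$ also average to zero, so Theorem~\ref{thm-1} pins $J_{\mathcal{T}}^1=J_{\mathcal{T}}^2=0\leq\epsilon_1$, while the sign of $r_0$ reveals the task and a $\Pi_3$ policy attains $J_{\mathcal{T}}^3=J_{\mathcal{T}}^*=1$. What each buys: your example is shorter, gives exact values ($0$ versus $1$), and a single construction serves all $(\epsilon_1,\epsilon_2)$ simultaneously; but it essentially requires negative rewards, since with nonnegative rewards and two equiprobable tasks one always has $J_{\mathcal{T}}^1 \geq \tfrac{1}{2}\max_i J_{\mathcal{M}_i}^* \geq \tfrac{1}{2}J_{\mathcal{T}}^*$ (play the optimal Markovian policy of the better task and discard the other term), so no two-task example can push $J_{\mathcal{T}}^1$ below $1/2$. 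Nothing in the paper's formalism forbids negative rewards, so your construction is admissible; the paper's many-task dilution is precisely the device that avoids signed rewards, and it additionally exhibits the intrinsic probing cost of $\Pi_3$ (yielding $J_{\mathcal{T}}^3$ between $1-\epsilon_2$ and $1$ rather than exactly $J_{\mathcal{T}}^*$), a phenomenon you only sketch as an optional modification with overlapping reward distributions.
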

\begin{proof}
Given fixed discount factor $\gamma\in(0,1)$, we first take $n\in\mathbb{N}$ satisfying $n \ge \frac{1}{\epsilon_1}$, $A = \epsilon_2\frac{n}{n-1}\frac{1-\gamma}{1-\gamma^n}$, and $B=\frac{1-\gamma}{\gamma^{n+1}}\left(1-\epsilon_2\frac{n}{n-1}\right)$. We construct state sets $\mathcal{S} = \{s_t^l\} (t=0,1,...,\infty,l=1,...,n)$ and action sets $\mathcal{A} = \{a_j\}_{j=1}^n$. Then we construct $n$ tasks $\mathcal{M}_i = (\mathcal{S}, \mathcal{A}, \mathcal{P}, \mathcal{R}_i, \gamma), i=1,...,n$, which share the same dynamic $\mathcal{P}$ and different reward functions $\mathcal{R}_i$. The initial state of each task is $s_0^1$, and the dynamic as well as reward functions are as below
\begin{equation}
\begin{split}
    &\mathcal{P}(s_t^l, a_j) = \mathbb{I}(s=s_{t+1}^j), \quad j,l=1,...,n;\ t=0,1,...,\infty\\
    &\mathcal{R}_i(s_t^l, a_j) = f(t)\mathbb{I}(i=j),\quad i,j,l=1,...,n;\ t=0,1,...\infty \\
& f(t) = \left \{
\begin{array}{ll}
    A,\quad     & t\leq n-1\\
    B,\quad     & t\ge n\\
\end{array}
\right.
\end{split}
\end{equation}
Take $\mathcal{T}$ as the uniform distribution over $\mathcal{M}_1, ..., \mathcal{M}_n$, thus we have
\begin{equation}
\begin{split}
    J_{\mathcal{M}_i}^* =& A + A\gamma + ...+A\gamma^{n-1}+B\gamma^{n}+B\gamma^{n+1}+... 
    = A\frac{1-\gamma^{n}}{1-\gamma} + B \frac{\gamma^{n+1}}{1-\gamma} = 1 \\
    J_{\mathcal{T}}^* 
    =& \frac{1}{n} \sum_{i=1}^n J_{\mathcal{M}_i}^* = 1.\\
\end{split}
\end{equation}
Since our construction satisfies $\mathbb{E}_{\mathcal{T}} [\mathcal{R}_i(s_k^l, a_j)] = \frac{f(k)}{n}$, for $\forall \pi\in \Pi_2$, we have
\begin{equation}
\begin{split}
    J_{\mathcal{T}}(\pi) =& \frac{1}{n} (A + A\gamma + ...+A\gamma^{n-1}+B\gamma^{n}+B\gamma^{n+1}+... )
    = \frac{1}{n}, \\
    J_{\mathcal{T}}^1 =& J_{\mathcal{T}}^2 = \max_{\pi\in\Pi_2} J_{\mathcal{T}}(\pi) =  \frac{1}{n} \leq \epsilon_1.
\end{split}
\end{equation}
Moreover, we can construction an agent $\hat\pi\in\Pi_3$ that takes action via the historical trajectory $\hat{\tau}_t = (\hat{s}_0, \hat{a}_0, \hat{r}_0, ...\hat{s}_t)$:
\begin{equation}
\begin{split}
    \hat{\pi}(a_j|\hat{\tau}_t) = & \mathbb{I}(j=t+1),\quad t=0,1,...,n-1 \\
    \hat{\pi}(a_j|\hat{\tau}_t) = & \mathbb{I}(j=i),\quad t=n,...,\infty \\
\end{split}
\end{equation}
here $i = \arg\max\{\hat{r}_0, \hat{r}_1, ..., \hat{r}_{n-1}\} + 1$, thus we have
\begin{equation}
\begin{split}
    J_{\mathcal{T}}^3
    \ge J_{\mathcal{T}}(\hat{\pi})
    =& \frac{1}{n} (A + A\gamma + ...+A\gamma^{n-1})+B\gamma^{n}+B\gamma^{n+1}+... \\
    =& \frac{A}{n}\frac{1-\gamma^{n}}{1-\gamma} + B \frac{\gamma^{n+1}}{1-\gamma}
    = 1 - \frac{(n-1)A}{n}\frac{1-\gamma^{n}}{1-\gamma}
    \ge 1-\epsilon_2.
\end{split}
\end{equation}
Thus we have proven this result.
\end{proof}

\subsection{Proof and discussion of Theorem 4}
In this part, we introduce an informed version of Theorem 4, about why optimizing $p(\mathcal{M}|l)\forall l\in \mathcal{L}$ is beneficial for task generalization, with detailed proofs. Recall that we consider the policy hypothesis $\mathcal{H}_3$ here, that each policy $\pi:\mathcal{L}\rightarrow \Delta(\mathcal{A}), \mathcal{L} = \cup_{t=1}^{\infty}\mathcal{L}_t, \mathcal{L}_t = (\mathcal{S}\times \mathcal{A}\times \mathbb{R})^{t-1} \times \mathcal{S}$. As directly such $\mathcal{S}-\mathcal{A}-\mathcal{R}$ memorized policy is difficult, we consider an alternative MDP as $\tilde{\mathcal{M}} = (\mathcal{L}, \mathcal{A}, \mathcal{P}_{\mathcal{M}}, \mathcal{R}_{\mathcal{M}}, \gamma)$. For $\forall l=(s_1,a_1,r_1,...,s_t)\in\mathcal{L}_t\subseteq\mathcal{L}$, we can sample the action $a_t$ from the distribution $\pi(\cdot|l)$. Then the environment will feedback the reward signal $r_t = \mathcal{R}_{\mathcal{M}}(l,a_t) = \mathcal{R}_{\mathcal{M}}(s_t, a_t)$, we can sample $s_{t+1}$ from the distribution $\mathcal{P}(\cdot|s_t,a_t)$, and the next environment state will be $l'=(s_1,a_1,r_1,...,s_t,a_t,r_t,s_{t+1}) \in \mathcal{L}_{t+1}$. In summary, we set the distribution $p(l' |l,a_t)$ as the corresponding dynamic $\mathcal{P}_{\mathcal{M}}$. Notice that although all $\mathcal{M}\in\mathcal{T}$ share the same dynamic $\mathcal{P}$, their new dynamic $\mathcal{P}_{\mathcal{M}}$ are different since the dynamic is related to the given reward. An obvious advantage of introducing $\tilde{\mathcal{M}}$ is that $\pi\in\Pi_3$ is now Markovian in $\tilde{\mathcal{M}}$, and it is much easier to analyze its performance.

Obviously, we can set $J_{\mathcal{M}}(\pi) = J_{\tilde{\mathcal{M}}}(\pi), Q_{\mathcal{M}}^*(s,a) = Q_{\tilde{\mathcal{M}}}^*(s,a) $ to simplify the notation, and we can prove that
\begin{theorem}
For any policy $\pi\in\Pi_3$, we have
\begin{equation}
\begin{split}
\label{app_ep_20}
    J_{\mathcal{T}}^* - \mathbb{E}_{\mathcal{M}\sim\mathcal{T}}\left[J_{\mathcal{M}}(\pi)\right]
    =& \frac{1}{1-\gamma} \int_{\mathcal{L}} p(l) \left[\int  p(\mathcal{M}|l) \max_a Q_{\mathcal{M}}^*(l,a) d\mathcal{M} - \int_{a,\mathcal{M}} \pi(a|l)  p(\mathcal{M}|l) Q_{\mathcal{M}}^*(l,a)dad\mathcal{M} \right] dl\\
    \ge & \frac{1}{1-\gamma} \int_{\mathcal{L}} p(l) \left[\int  p(\mathcal{M}|l) \max_a Q_{\mathcal{M}}^*(l,a) d\mathcal{M}
    - \max_a \int_{\mathcal{M}}   p(\mathcal{M}|l) Q_{\mathcal{M}}^*(l,a)d\mathcal{M} \right] dl,
\end{split}
\end{equation}
here $p(l)$ is a distribution of $\mathcal{L}$ related to $\mathcal{T},\pi$ and $p(\mathcal{M}|l)$ is the task posterior related to $\pi$.
\end{theorem}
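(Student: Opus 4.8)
The plan is to lift the non-Markovian policy $\pi\in\Pi_3$ into the augmented MDP $\tilde{\mathcal{M}}=(\mathcal{L},\mathcal{A},\mathcal{P}_{\mathcal{M}},\mathcal{R}_{\mathcal{M}},\gamma)$ introduced above, in which $\pi$ acts Markovianly on the history-state $l\in\mathcal{L}$, and then to reuse the performance-difference identity already established in the proof of Theorem~3 (Appendix B.4). First I would record the two reductions that make the left-hand side tractable: $J_{\mathcal{M}}(\pi)=J_{\tilde{\mathcal{M}}}(\pi)$ by construction, and $\max_{\pi\in\Pi_3}J_{\tilde{\mathcal{M}}}(\pi)=\max_{\pi}J_{\mathcal{M}}(\pi)=J_{\mathcal{M}}(\pi_{\mathcal{M}}^*)$, since access to the full history can neither raise nor lower the optimal value of a single fixed MDP beyond its Markovian optimum. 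Hence $J_{\mathcal{T}}^*=\mathbb{E}_{\mathcal{M}\sim\mathcal{T}}[J_{\mathcal{M}}(\pi_{\mathcal{M}}^*)]$, and I may freely replace $Q_{\mathcal{M}}^*$ by $Q_{\tilde{\mathcal{M}}}^*$.

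Next, applying Eq.~\eqref{app_eq_11} verbatim but with the state $s$ replaced by the history $l$ and the discounted occupancy $d_{\mathcal{M}}^{\pi}$ replaced by $d_{\tilde{\mathcal{M}}}^{\pi}$, I obtain for each fixed $\mathcal{M}$
\begin{equation*}
J_{\mathcal{M}}(\pi_{\mathcal{M}}^*)-J_{\mathcal{M}}(\pi)=\frac{1}{1-\gamma}\mathbb{E}_{l\sim d_{\tilde{\mathcal{M}}}^{\pi}}\left[\max_a Q_{\mathcal{M}}^*(l,a)-\int_a \pi(a|l)Q_{\mathcal{M}}^*(l,a)\,da\right].
\end{equation*}
Taking the expectation over $\mathcal{M}\sim\mathcal{T}$ and performing the key Bayesian re-indexing then finishes the equality line: the joint law of $(\mathcal{M},l)$ is $p(\mathcal{M})\,d_{\tilde{\mathcal{M}}}^{\pi}(l)$, which I rewrite as $p(l)\,p(\mathcal{M}|l)$ with $p(l)=\int p(\mathcal{M})\,d_{\tilde{\mathcal{M}}}^{\pi}(l)\,d\mathcal{M}$ the marginal trajectory distribution and $p(\mathcal{M}|l)$ the induced task posterior. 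Swapping the order of integration (Tonelli, using boundedness of $Q^*$) and pulling the $l$-integral out front reproduces exactly the stated equality.

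Finally, for the inequality I fix $l$ and set $g(a)\triangleq\int_{\mathcal{M}}p(\mathcal{M}|l)\,Q_{\mathcal{M}}^*(l,a)\,d\mathcal{M}$. Because $\pi(\cdot|l)$ is a probability distribution over $\mathcal{A}$, the term $\int_{a,\mathcal{M}}\pi(a|l)\,p(\mathcal{M}|l)\,Q_{\mathcal{M}}^*(l,a)\,da\,d\mathcal{M}=\int_a\pi(a|l)\,g(a)\,da$ is an average of $g$ and is therefore bounded above by $\max_a g(a)=\max_a\int_{\mathcal{M}}p(\mathcal{M}|l)\,Q_{\mathcal{M}}^*(l,a)\,d\mathcal{M}$. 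Integrating this pointwise bound against the nonnegative weight $p(l)/(1-\gamma)$ gives the displayed lower bound, and reading off the residual $\int p(\mathcal{M}|l)\max_a Q_{\mathcal{M}}^*-\max_a\int p(\mathcal{M}|l)Q_{\mathcal{M}}^*$ shows that driving $p(\mathcal{M}|l)$ toward a Dirac mass collapses the gap to zero, which is the intended qualitative conclusion.

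The main obstacle I expect is the bookkeeping of how the occupancy measure $d_{\tilde{\mathcal{M}}}^{\pi}$ depends on $\mathcal{M}$: unlike the shared dynamics $\mathcal{P}$ of the original tasks, the augmented dynamics $\mathcal{P}_{\mathcal{M}}$ differ across tasks because the emitted reward is baked into the next history-state, so the trajectory law genuinely varies with $\mathcal{M}$ and the posterior $p(\mathcal{M}|l)$ is nondegenerate. Getting the Bayesian factorization and the Fubini/Tonelli exchange precisely right, and checking that $p(l)$ and $p(\mathcal{M}|l)$ are well defined as a marginal/posterior pair, is the delicate step; once that is in place, both the equality and the final inequality follow routinely from ``average $\le$ max.''
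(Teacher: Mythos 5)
Your proposal is correct and follows essentially the same route as the paper's own proof: lifting $\pi$ into the augmented MDP $\tilde{\mathcal{M}}$ where it acts Markovianly on histories $l\in\mathcal{L}$, invoking the performance-difference identity from the proof of Theorem~3 for each fixed $\mathcal{M}$, re-factorizing the joint law $p(\mathcal{M})\,d_{\tilde{\mathcal{M}},\pi}(l)$ as $p(l)\,p(\mathcal{M}|l)$, and closing with the average-$\le$-max bound. As a minor aside, your final inequality correctly keeps the weight $p(\mathcal{M}|l)$ inside the $\max_a$ integral, which the paper's last displayed line omits by an apparent typo.
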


\begin{proof}
$\forall\pi\in\mathcal{H}_3$, as $\pi$ is Markovian in $\tilde{\mathcal{M}}$, we can directly utilize the proof of Theorem 3 from the beginning to Eq.~\eqref{app_eq_11}, and the only difference is that the dynamics in $\tilde{\mathcal{M}}$ are different but the dynamics in $\mathcal{M}$ are the same. Thus, we need to change the Eq.~\eqref{app_eq_12} as
\begin{equation}
\begin{split}
    \mathbb{E}_{\mathcal{M}\sim\mathcal{T}}\left[J_{\mathcal{M}}(\pi_{\mathcal{M}}^*) - J_{\mathcal{M}}(\pi)\right] 
    =& \frac{1}{1-\gamma}\mathbb{E}_{\mathcal{M}\sim\mathcal{T}} \mathbb{E}_{l\sim d_{\tilde{\mathcal{M}},\pi}(\cdot)} \left[\max_a Q_{\mathcal{M}}^*(l,a)
     - \int_{a}\pi(a|l)Q_{\mathcal{M}}^*(l,a)da\right]\\
    =& \frac{1}{1-\gamma}\int p(\mathcal{M}) \int_{\mathcal{L}} d_{\tilde{\mathcal{M}},\pi}(l) \left[\max_a Q_{\mathcal{M}}^*(l,a)
    - \int_{a}\pi(a|l)Q_{\mathcal{M}}^*(l,a)da\right]\\
    =& \frac{1}{1-\gamma}\int_{\mathcal{L}} p(l) \int p(\mathcal{M}|l) \left[\max_a Q_{\mathcal{M}}^*(l,a)
     - \int_{a}\pi(a|l)Q_{\mathcal{M}}^*(l,a)da\right]\\
    \ge & \frac{1}{1-\gamma} \int_{\mathcal{L}} p(l) \left[\int  p(\mathcal{M}|l) \max_a Q_{\mathcal{M}}^*(l,a) d\mathcal{M}
    - \max_a \int_{\mathcal{M}}  Q_{\mathcal{M}}^*(l,a)d\mathcal{M} \right] dl,
\end{split}
\end{equation}
here $p(l) = \int p(\mathcal{M})d_{\tilde{\mathcal{M}},\pi}(l) d\mathcal{M}$, and $p(\mathcal{M}|l) =p(\mathcal{M})d_{\tilde{\mathcal{M}},\pi}(l)/p(l)$ is the posterior distribution. 
\end{proof}
Finally, we will show that maximizing $p(\mathcal{M}|l)$ is helpful for task generalization. In the training stage, we will sample a task $\mathcal{M}$ and corresponding state-action-reward pairs $l$, thus optimizing $p(\mathcal{M}|l)$ will make it to be closer to some Dirac distribution. In such a situation, for each $l$, we can infer a ``most possible" posterior task $\mathcal{M}_l$ with high $p(\mathcal{M}_l|l)$, thus we can approximately take $\pi(l) = \arg\max_a Q_{\mathcal{M}_l}^*(l,a)$ and the optimal gap calculated by Eq.~\eqref{app_ep_20} will be controlled. 

\subsection{Sparse reward tasks}
In this part, we show that generalizing to tasks with the same dynamics and sparse rewards without extra knowledge (like context) is extremely difficult and sometimes impossible. It is because we cannot distinguish them via historical information. Here we construct an example.

Assume there are $n$ MDPs, $\mathcal{M}_i (i=1,...,n)$, each MDP share the same state set $\mathcal{S}=\{s_1,...,s_T\}$ and action set $\mathcal{A}=\{a_1,...,a_n\}$. The initial state is $s_1$ and the dynamic is that $\mathcal{P}(s_{t+1}|s_t,a_i) = 1(t=1,...,T-1, i=1,...,n), \mathcal{P}(s_{T}|s_T,a_i) = 1(i=1,...,n)$. As for the reward function, we define that
\begin{equation}
    \mathcal{R}_{\mathcal{M}_i}(s_{T-1}, a_i) = 1, i =1,...,n.
\end{equation}
and the reward function is 0 otherwise. In this case, any policies (including Markovian, state-action memorized, and state-action-reward memorized) in this task distribution can only handle one task since they can not distinguish them.

\section{Pseudo code of TAD}
\label{appendix_algo}
The detailed pseudo code of TAD is provided in Algorithm~\ref{tad-algo}.

\begin{algorithm*}[h]
    \caption{Task Aware Dreamer (TAD)} 
    \begin{algorithmic}[1] 
        \REQUIRE $M$ training tasks $\{\mathcal{M}_m\}_{m=1}^M$, $M$ replay buffers $\{\mathcal{D}_m\}_{m=1}^M$, $N$ test tasks $\{\mathcal{M}_{M+n}\}_{n=1}^N$, initialize neural network parameters of world models, the policy, and the critic
        \WHILE{not converge}
        \STATE $\verb|//|  Model\ Training$
        \FOR{$\text{update step} = 1,2,..., U$}
        \STATE Sample observation-action-reward pairs form each replay buffer $\{(o_t^i,a_t^i,r_t^i)_{t=1}^{T}\} \sim \mathcal{D}_i, i=1,2,...,M$
        \STATE Calculate the deterministic state $h$ and further calculate model states $s$.
        \STATE Update the world models via optimizing Eq. (6).
        \STATE Collect imagined trajectories from each $s_t$ via the policy and the world models.
        \STATE Use these imagined trajectories to update the policy and the critic.
        \ENDFOR
        \STATE $\verb|//|  Data\ Collection$
        \FOR{$m = 1,2,...,M$}
        \STATE $o_1\leftarrow \mathcal{M}_m.reset()$
        \FOR{$\text{sample step} = 1,2,..., S$}
        \STATE Compute $h_t,s_t$ and sample action $a_t$ via the policy.
        \STATE $r_t,o_{t+1}\leftarrow \mathcal{M}_m.step(a_t)$
        \ENDFOR
        \STATE Add these data to the replay buffer $\mathcal{D}_m$.
        \ENDFOR
        \ENDWHILE
    \STATE $\verb|//|  Model\ Evaluation$
    \FOR{$n = 1,2,...,N$}
    \STATE $o_1\leftarrow \mathcal{M}_{M+n}.reset()$
    \WHILE{the environment not done}
    \STATE Compute $h_t,s_t$ and sample action $a_t$ via the policy.
    \STATE $r_t,o_{t+1}\leftarrow \mathcal{M}_{M+n}.step(a_t)$
    \ENDWHILE
    \ENDFOR
    \end{algorithmic}  
\label{tad-algo}
\end{algorithm*} 

\section{Experimental details for DMControl}
\label{app_expe_intro}

\subsection{Details of all task combinations}

In this part, we first \emph{roughly} discuss the reward function of tasks in our experiments to better understand their TDR. These reward functions always defined by $\text{tolerance}$ function in DeepMind control suite~\cite{8}, which is a smooth function with parameters $\text{tolerance}(x, \text{bounds} = (\text{lower}, \text{upper}))$ and hope the value of $x$ is within $(\text{lower}, \text{upper})$. More details about $\text{tolerance}$ function can be found in \cite{8}.

\begin{itemize}
    \item \textbf{Cartpole-balance\&sparse.} This task combination includes two tasks: Cartpole-balance and Cartpole-balance$\_$sparse, which both hope to balance an unactuated pole with dense and sparse rewards, respectively. The optimal actions of these tasks are both hoped to balance the agent, and thus, TDR here is 0.
    \item \textbf{Walker-stand\&walk\&prostrate\&flip.} This task combination includes four tasks: Walker-stand, Walker-walk, Walker-prostrate, and Walker-flip.
    Walker-stand hopes the height of a two-legged robot to be larger than the target height. 
    Walker-walk hopes the height of the improved planar walker to be larger than a target height and the speed of the robot to be larger than another target speed.
    Walker-prostrate hopes the height of the robot to be lower than the target height.
    Finally, Walker-flip hopes the robot to stand and move forward to the target speed by executing a rapid twist and jump.
    In detail, their reward functions can be roughly described as
    \begin{equation}
    \begin{split}
        \mathcal{R}_{\text{stand}} =& \text{tolerance}(\text{height}, (1.2,\infty)),\\
        \mathcal{R}_{\text{walk}} =& \text{tolerance}(\text{height}, (1.2,\infty))
        * \text{tolerance}(\text{speed}, (1,\infty)). \\
        \mathcal{R}_{\text{prostrate}} =& \text{tolerance}(\text{height}, (0.0, 0.2)). \\
        \mathcal{R}_{\text{flip}} = & \text{tolerance}(\text{height}, (1.2,\infty))
        * \text{tolerance}(\text{angmomentum}, 5, \infty).
    \end{split}
    \end{equation}
    In this situation, for all states, the optimal actions of Walker-walk/stand and Walker-prostrate are opposite, and TDR here is huge.
    \item \textbf{Cheetah-run\&run$\_$back\&flip\&flip$\_$back.} This task combination includes four tasks: Cheetah-run, Cheetah-run$\_$back, Cheetah-flip, and Cheetah-flip$\_$back.
    Cheetah-run hopes to control a running planar biped to run forward within a target speed. 
    Cheetah-run$\_$back, differently, hopes to control the Cheetah robot to run backward within a target speed. 
    Cheetah-flip hopes the robot to move forward to the target speed by executing a rapid twist and jump.
    Similarly, Cheetah-flip$\_$back hopes to control the robot to move backward by flipping.
    \begin{equation}
    \begin{split}
        \mathcal{R}_{\text{Cheetah}\_\text{run}} = & \text{tolerance}(\text{speed}, (10, \infty)). \\
        \mathcal{R}_{\text{Cheetah}\_\text{run$\_$back}} = &\text{tolerance}(-\text{speed}, (10, \infty)). \\
        \mathcal{R}_{\text{Cheetah}\_\text{flip}} = & \text{tolerance}(\text{angmomentum}, (5, \infty)). \\
        \mathcal{R}_{\text{Cheetah}\_\text{flip$\_$back}} = & \text{tolerance}(-\text{angmomentum}, (5, \infty)).
    \end{split}
    \end{equation}
    Obviously, in this situation, for all states, the optimal actions of Cheetah-run and Cheetah-run$\_$back are opposite, and TDR here is also huge.
\end{itemize}

\begin{table*}[t]
\centering
\footnotesize
\tablestyle{3.5pt}{1.1}
\begin{tabular}{ccccccccc}
\toprule
\multirow{2}*{Tasks}& Acrobot-Cartpole-Pendulum & Walker-Cheetah-Hopper & \multicolumn{2}{c}{Cheetah-run$\_$mass} & \multicolumn{2}{c}{Walker-walk$\_$mass}\\
& Train\&Test & Train\&Test & Train & Test & Train & Test \\
\midrule
Dreamer
& 541.3 $\pm$ 4.0
& 299.4 $\pm$ 17.2
& 717.8 $\pm$ 27.9 
& 711.7 $\pm$ 38.8
& 889.9 $\pm$ 114.1 
& 903.3 $\pm$ 102.8\\
TAD
& \textbf{667.7 $\pm$ 6.4}
& \textbf{554.7 $\pm$ 23.6}
& \textbf{754.3 $\pm$ 22.9}
& \textbf{738.2 $\pm$ 41.1}
& \textbf{957.8 $\pm$ 35.1}
& \textbf{963.1 $\pm$ 32.6}\\
\bottomrule
\end{tabular}
\caption{Generalization performance (mean $\pm$ std) over different task distributions in image-based DMC of the best policy.
Numbers greater than \textbf{95 $\%$} of the best performance for each environment are \textbf{bold}.}
\label{table_dynamic}
\end{table*}

\subsection{Details of all task distributions}
Now we introduce the three task distributions in our experiments, which are designed based on existing tasks in the DeepMind control suite for testing the generalization of trained agents.
\begin{itemize}
    \item \textbf{Cheetah$\_$speed($\alpha,\beta$).} This task distribution is designed in this paper with parameter $0\leq \beta\leq \alpha$, based on the task Cheetah$\_$run in the DeepMind control suite, and hopes the Cheetah robot can run with the target speed interval $(\alpha-\beta, \alpha+\beta)$.
    \begin{equation}
    \begin{split}
        &\mathcal{R}_{\text{Cheetah}\_\text{speed}}(\alpha, \beta) = \text{tolerance}(\text{speed}, (\alpha-\beta, \alpha+\beta)).
    \end{split}
    \end{equation}
    We train the agents in tasks with parameters ($0.5,0.2$), ($1.5,0.2$), ($2.0,0.2$), ($3.0,0.2$) and test them in tasks with parameters ($1.0,0.2$), ($2,5,0.2$).
    \item \textbf{Pendulum$\_$angle($\alpha,\beta$).} This task distribution is designed in this paper with parameter $-1\leq \alpha\leq \beta\leq 1$, based on the task Pendulumh$\_$swingup in DeepMind control suite, and hopes the Pendulum robot can swing up within the target angle interval $(\arccos\alpha, \arccos\beta)$.
    \begin{equation}
    \begin{split}
        \mathcal{R}_{\text{Pendulum}\_\text{angle}}(\alpha, \beta) = \text{tolerance}(\text{angle}, (&\arccos\alpha, 
        \arccos\beta)).
    \end{split}
    \end{equation}
    Training tasks are with parameters ($-0.95,-0.9$), ($-0.85,-0.8$), ($-0.8,-0.75$), ($-0.7,-0.65$) and test tasks are with parameters ($-0.9,-0.85$), ($-0,75,-0.7$).
    \item \textbf{Walker$\_$speed($\alpha,\beta$).} This task distribution is designed in this paper with parameter $0\leq \beta\leq \alpha$, based on the task Walker$\_$run in DeepMind control suite, and hopes the Walker robot can run within the target speed interval $(\alpha-\beta, \alpha+\beta)$.
    \begin{equation}
    \begin{split}
        &\mathcal{R}_{\text{Walker}\_\text{speed}}(\alpha, \beta) = \text{tolerance}(\text{speed}, (\alpha-\beta, \alpha+\beta)).
    \end{split}
    \end{equation}
    We train the agents in tasks with parameters ($0.5,0.2$), ($1.5,0.2$), ($2.0,0.2$), ($3.0,0.2$) and test in tasks with parameters ($1.0,0.2$), ($2,5,0.2$).
\end{itemize}

Moreover, we introduce some details about our experiments. Our codes are based on Python and the deep learning library PyTorch. All algorithms are trained on one NVIDIA GeForce RTX 2080 Ti. Each seed and each task setting will take around 3 days. As for the hyper-parameters, we follow previous works~\cite{3,4,5} and select 2 as the action repeat for all experiments following~\cite{5}.

\subsection{Visualization results for task generalization}

Moreover, for each task sampled from the task distribution Cheetah$\_$speed (here parameters $(3.0, 0.2)$, $(2.0, 0.2)$, $(1.5, 0.2)$, $(0.5, 0.2)$ are for training tasks and parameters $(2.5, 0.2), (0.5, 0.2)$ are for test tasks), we plot the speed of the agent as a function of the timestep in Fig.~\ref{fig_cheetah}. As depicted, for each task, the agent trained by TAD will quickly improve its speed until it reaches the target speed and then keep its speed since the speed determines whether it has met the task requirements via utilizing historical information. Consequently, TAD is aware of different tasks and can successfully generalize to unseen test tasks. 
We also provide videos of these trajectories in the supplementary materials.

\begin{figure}[h]
\centering
\includegraphics[height=3.6cm,width=5.6cm]{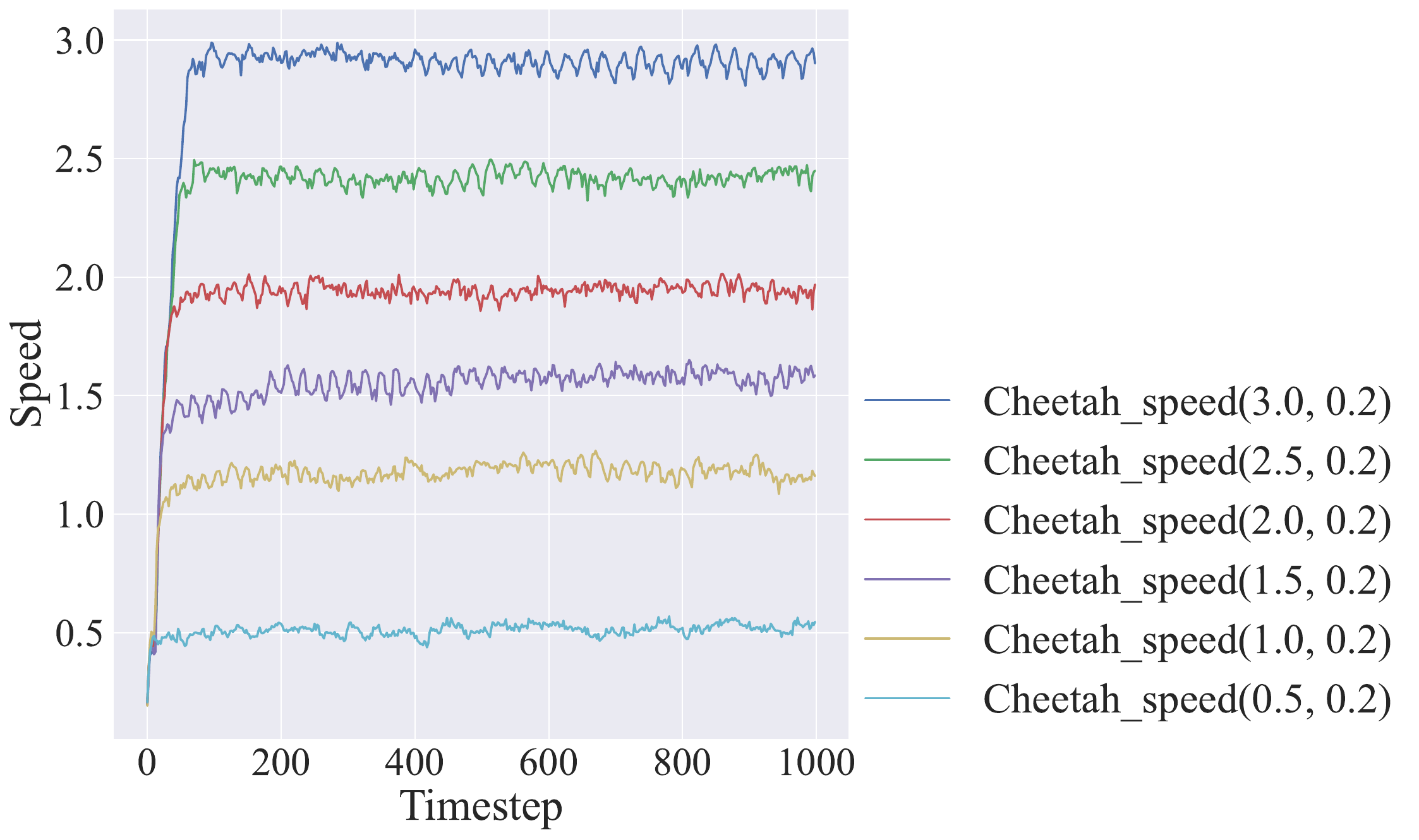}
\caption{
Visualization of the trained TAD agent in the task distribution of Cheetah$\_$speed. 
We plot the speed of the agent as a function of timesteps in all tasks. 
}
\label{fig_cheetah}
\end{figure}

\subsection{Implementation details}
TAD is implemented based on a widely adopted Dreamer repository {https://github.com/yusukeurakami/dreamer-pytorch} and we follow its default setups, including hyperparameters and network architecture. For the task model $p_\theta(\mathcal{M}|h_t,s_t)$, we choose to implement it with a simple 3-layer MLP, which maps current state $s_t$ and hidden state $s_t$ to a probability simplex $\triangle^{M-1}$ (assume we have $M$ tasks).

\subsection{TDR analyses in DMControl experiments}
We would like to illustrate the usage of TDR with a few examples in our experiments, including \textbf{Walker-stand$\&$walk} and \textbf{Walker-walk$\&$prostrate}.\\
To simplify the analyses, we set the reward of each state in Walker-stand $\mathcal{M}_1$ to 1 if the height of the robot is higher than 1.2 and 0 otherwise. In Walker-prostrate $\mathcal{M}_2$, we set the reward of each state to 1 if the height is lower than 0.2 and 0 otherwise. In Walker-walk, $\mathcal{M}_3$, we set the reward of each state to 1 if the height is higher than 1.2 and the speed is greater than 1.0; otherwise, the reward is 0.\\
We first estimate the optimal $Q^*$ function of these three tasks. For any state action pair $(s,a)$ in $\mathcal{M}_1$, if the height of the current state $s$ is higher than 1.2, the optimal policy is to maintain the height and we have $Q^*_{\mathcal{M}_1} = 1+\gamma+\gamma^2+...= \frac{1}{1-\gamma}$. Otherwise, if the height of $s$ is lower than 1.2, the optimal policy is to make the agent stand in one step and maintain a height of at least 1.2. We can deduce that $Q^*_{\mathcal{M}_1} = \gamma+\gamma^2+...= \frac{\gamma}{1-\gamma}$. The analyses of $Q^*_{\mathcal{M}_2}, Q^*_{\mathcal{M}_3}$ are similar and we conclude as follows:
\begin{equation}
\begin{split}
    Q^*_{\mathcal{M}_1}(s,a) & = 
    \begin{cases}
    \frac{1}{1-\gamma}, & \text{if height of $s\ge 1.2$} \\
    \frac{\gamma}{1-\gamma}, & \text{if height of $s< 1.2$}
    \end{cases} \\
    Q^*_{\mathcal{M}_2}(s,a) & = 
    \begin{cases}
    \frac{1}{1-\gamma}, & \text{if height of $s\leq 0.2$} \\
    \frac{\gamma}{1-\gamma}, & \text{if height of $s > 0.2$}
    \end{cases} \\
    Q^*_{\mathcal{M}_3}(s,a) & = 
    \begin{cases}
    \frac{1}{1-\gamma}, & \text{if height of $s\ge 1.2$ and speed of $s\ge 1.0$} \\
    \frac{\gamma}{1-\gamma}, & \text{if height of $s< 1.2$ or speed of $s< 1.0$}
    \end{cases} 
\end{split}
\end{equation}
Consider a task combination $\mathcal{T}_1$ that is distributed equally over Walker-stand and Walker-walk, i.e., $\mathbb{P}(\mathcal{M}_1) = \mathbb{P}(\mathcal{M}_3) = 0.5$. If height of $s \ge 1.2$ and speed of $s \ge 1.0$, $Q^*_{\mathcal{M}_1}(s,a) = Q^*_{\mathcal{M}_3}(s,a) = \frac{1}{1-\gamma}$. Thus, we have:
\begin{equation}
\begin{split}
    D_{\text{TDR}}(\mathcal{T}_1, s) 
    =& \mathbb{E}_{\mathcal{M}\sim\mathcal{T}} [\max_a Q_{\mathcal{M}}^*(s,a)]
    - \max_{a}\mathbb{E}_{\mathcal{M}\sim\mathcal{T}} [ Q_{\mathcal{M}}^*(s,a) ]
    = \frac{1}{1-\gamma} - \frac{1}{1-\gamma} = 0.
\end{split}
\end{equation}
Based on Theorem 3, for some policy $\pi\in \Pi_1, \Pi_2$:
\begin{equation}
    J_{\mathcal{T}}^* - \mathbb{E}_{\mathcal{M}\sim\mathcal{T}_1}\left[J_{\mathcal{M}}(\pi)\right]  
    \ge \frac{1}{1-\gamma}\mathbb{E}_{s\sim d_{\mathcal{M},\pi}} [D_{\text{TDR}}(\mathcal{T}_1, s)] = 0.
\end{equation}
Consequently, in the task distribution $\mathcal{T}_1$, there is no gap between the optimal performance achievable by the policies in $\Pi_1,\Pi_2$ and the actual optimal value. A TDR of $0$ indicates that policies in $\Pi_1$ and $\Pi_2$ are sufficient to achieve optimal performance in $\mathcal{T}_1$.. \\
In contrast, consider a task combination $\mathcal{T}_2$ that is distributed equally over Walker-walk and Walker-prostrate, i.e., $\mathbb{P}(\mathcal{M}_2) = \mathbb{P}(\mathcal{M}_3) = 0.5$. For any $s$, $\frac{1}{2} (Q^*_{\mathcal{M}_2}(s,a) + Q^*_{\mathcal{M}_3}(s,a)) \leq \frac{1}{2} (\frac{1}{1-\gamma} + \frac{\gamma}{1-\gamma})$. Thus, we have:
\begin{equation}
\begin{split}
    D_{\text{TDR}}(\mathcal{T}_2, s) 
    =& \mathbb{E}_{\mathcal{M}\sim\mathcal{T}} [\max_a Q_{\mathcal{M}}^*(s,a)]
    - \max_{a}\mathbb{E}_{\mathcal{M}\sim\mathcal{T}_2} [ Q_{\mathcal{M}}^*(s,a) ]
    \ge \frac{1}{1-\gamma} - \frac{1}{2}\frac{1+\gamma}{1-\gamma} = 0.5.
\end{split}
\end{equation}
Consequently, based on Theorem 3, for any policy $\pi\in \Pi_1, \Pi_2$:
\begin{equation}
    J_{\mathcal{T}}^* - \mathbb{E}_{\mathcal{M}\sim\mathcal{T}_2}\left[J_{\mathcal{M}}(\pi)\right]  
    \ge \frac{1}{1-\gamma}\mathbb{E}_{s\sim d_{\mathcal{M},\pi}} [D_{\text{TDR}}(\mathcal{T}_2, s)] = \frac{1}{2} \frac{1}{1-\gamma}.
\end{equation}
In other words, our conclusion provides a non-trivial gap between the optimal performance achievable by the policies in $\Pi_1,\Pi_2$ and the actual optimal value. A non-zero TDR shows that policies in $\Pi_1,\Pi_2$ can only obtain half the optimal performance in $\mathcal{T}_2$.\\
Our experimental results in Table~\ref{table_tad_multi_task} also confirm our analysis regarding the effect of TDR. While policies in $\Pi_1,\Pi_2$ perform appealingly in Walker-stand$\&$walk, their performance degenerates significantly in Walker-walk$\&$prostrate.

{
\begin{table*}[h]
\centering
\tablestyle{3.5pt}{1.1}
\footnotesize
\begin{tabular}{ccc}
\toprule
{Algorithm}& {Walker-stand$\&$walk} & {Walker-walk$\&$prostrate}
\\
\midrule
{CURL}
& {563.4 $\pm$ 16.0}
& {441.0 $\pm$ 7.7}
\\
{SAC+AE}
& {683.8 $\pm$ 164.3}
& {445.8 $\pm$ 2.6}
\\
{PlaNet}
& {\textbf{990.6 $\pm$ 3.8}}
& {445.4 $\pm$ 5.1}
\\
{Dreamer}
& {\textbf{992.3 $\pm$ 1.7}}
& {449.7 $\pm$ 12.5}
\\
{TAD}
& {\textbf{994.6 $\pm$ 0.1}}
& {\textbf{855.3 $\pm$ 56.2}}
\\
\bottomrule
\end{tabular}
\caption{{Performance (mean $\pm$ std) over different task combinations in DMC of the best policy.
Numbers greater than 95 \% of the best performance for each environment are bold.}}
\label{table_tad_multi_task}
\end{table*}
}

\subsection{MSE loss of imagination images in DMControl experiments}
In Table~\ref{table_mse_loss}, we have provided quantitative results on the models' prediction error of imagined observations. As shown below, the prediction error of Dreamer gradually increases over the time horizon, while TAD exhibits no significant increasing trend. We attribute this phenomenon to TAD’s ability to access diverse trajectories sampled from different tasks during training.

\begin{table*}[h]
\centering
\tablestyle{3.5pt}{1.1}
\footnotesize
\begin{tabular}{cccccccccccccccc}
\toprule
{Algorithm} & {Task} & {step 10} & {step 15} & {step 20} & {step 25} & {step 30} & {step 35}
\\
\midrule
\multirow{2}{*}{{Dreamer}}
& {Cheetah-run}
& {19.2}
& {18.8}
& {28.7}
& {29.3}
& {30.1}
& {32.5}
\\
& {Cheetah-flip}
& {18.6}
& {24.3}
& {34.8}
& {37.5}
& {33.5}
& {31.1}
\\
\multirow{2}{*}{{TAD (Ours)}}
& {Cheetah-run}
& {19.3}
& {23.2}
& {21.3}
& {23.9}
& {25.6}
& {23.3}
\\
& {Cheetah-flip}
& {17.9}
& {18.8}
& {21.1}
& {19.5}
& {17.3}
& {19.4}
\\
\bottomrule
\end{tabular}
\caption{{MSE Loss of imagination results of TAD and Dreamer}}
\label{table_mse_loss}
\end{table*}

\subsection{Additional results with task parameters}
The core idea of TAD is to regularize the world model to distinguish tasks with $p_\theta(\mathcal{M}|h_t,s_t)$, with $\mathcal{M}$ being the task indicator. While the current implementation uses a set of discrete training tasks, we would like to clarify that our method is not limited to discrete spaces of tasks. For continuous spaces of tasks, when the task parameters are available, we can optimize TAD by directly predicting the continuous \textbf{task parameters}. When the task parameters are unavailable, we can switch to the discrete implementation, where we randomly sample $M$ tasks to construct a set of discrete training tasks and optimize TAD by predicting the \textbf{task indices}. Obviously, the task indices are less informative than task parameters, because they can only identify different tasks, while providing no information about the relationships or similarities among them. We have provided additional experiments of TAD with access to continuous task parameters in Table~\ref{table_mujoco_continuous}, labeled as TAD-Continuous. Results in the table indicate that our method using the discrete implementation (TAD-CE/SC) achieves performance comparable to its continuous counterpart.

\begin{table*}[h]
\centering
\footnotesize
\resizebox{0.95\textwidth}{!}{
\begin{tabular}{cccccccccc}
\toprule
\multirow{2}*{{Algorithms}}& \multirow{2}*{{Hypothesis}} & {Half-Cheetah-Fwd-Back(1e7)} & \multicolumn{2}{c}{{Half-Cheetah-Vel(1e7)}} & \multicolumn{2}{c}{{Humanoid-Direc-2D(1e6)}}\\
& & {Train\&Test} & {Train} & {Test} & {Train} & {Test}\\
\midrule
{PlaNet}
& {$\Pi_2$}
& {30.5 $\pm$ 42.9}
& {-198.1 $\pm$ 1.9} 
& {-202.1 $\pm$ 1.8} 
& {215.9 $\pm$ 72.3}
& {220.6 $\pm$ 75.3} \\
{Dreamer}
& {$\Pi_2$}
& {127.4 $\pm$ 181.8} 
& {-151.4 $\pm$ 0.4}
& {-169.4 $\pm$ 1.2}
& {260.5 $\pm$ 48.9}
& {263.5 $\pm$ 52.3} \\
{RL2(zero-shot)}
& {$\Pi_3$}
& {1070.7 $\pm$ 109.7}
& {---}
& {-70.3 $\pm$ 6.7}
& {---}
& {191.9 $\pm$ 50.8} \\
{RL2(few-shot)}
& {$\Pi_3$}
& {1006.9 $\pm$ 26.4}
& {---}
& {-146.9 $\pm$ 0.4}
& {---}
& {268.8 $\pm$ 30.2} \\
{MAML(few-shot)}
& {---}
& {429.3 $\pm$ 81.4}
& {---}
& {-121.0 $\pm$ 37.1}
& {---}
& {205.3 $\pm$ 34.7} \\
{VariBAD(zero-shot)}
& {$\Pi_3$}
& {1177.5 $\pm$ 94.9}
& {---}
& {-58.4 $\pm$ 20.6}
& {---}
& {260.3 $\pm$ 61.6} \\
{TAD-CE (Ours)}
& {$\Pi_3$}
& {1455.8 $\pm$ 78.3}
& {\textbf{-49.3 $\pm$ 1.9}}
& {\textbf{-47.1 $\pm$ 0.3}}
& {\textbf{339.5 $\pm$ 78.7}}
& {\textbf{335.5 $\pm$ 70.5}} \\
{TAD-SC (Ours)}
& {$\Pi_3$}
& {\textbf{1541.5 $\pm$ 114.8}}
& {-\textbf{50.5 $\pm$ 1.6}}
& {-\textbf{49.6 $\pm$ 1.6}}
& {260.2 $\pm$ 185.0}
& {249.0 $\pm$ 168.9}\\
{TAD-Continuous (Ours)}
& {$\Pi_3$}
& {\textbf{1539.9 $\pm$ 102.0}}
& {\textbf{-47.8 $\pm$ 1.1}}
& {\textbf{-47.5 $\pm$ 1.3}}
& {\textbf{302.9 $\pm$ 36.3}}
& {\textbf{304.6 $\pm$ 39.0}} \\
\bottomrule
\end{tabular}
}
\caption{{Generalization performance (mean $\pm$ std) in MuJoCo.
Numbers greater than 95$\%$ of the best performance are \textbf{bold}.}}
\label{table_mujoco_continuous}
\end{table*}

\section{Experimental details for MuJoCo}
\label{appendix_mujoco}
We here introduce state-based control tasks, including Half-CheetahFwd-Back, Half-Cheetah-Vel, and Humanoid-Direc-2D, in detail, following the setting of previous meta RL works~\cite{27,29}.

\begin{itemize}
    \item \textbf{Half-Cheetah-Fwd-Back.} This task distribution includes two tasks: moving forward and moving backward.
    \item \textbf{Half-Cheetah-Vel.} This task distribution hopes the agent to move forward and achieve the target velocity. There are 100 training tasks and 30 test tasks for experiments.
    \item \textbf{Humanoid-Direc-2D.} This task distribution hopes the agent to move in the target direction. There are 100 training tasks and 30 test tasks for experiments.
\end{itemize}

Moreover, we introduce some details about our experiments. Our codes are based on Python and the deep learning library PyTorch. All algorithms are trained on one NVIDIA GeForce RTX 2080 Ti. Each seed and each task setting will take around 1 day. We select 1 as the action repeat for all following experiments.

\begin{table*}[t]
\centering
\footnotesize
\begin{tabular}{ccccccccc}
\toprule
\multirow{2}*{$\beta$}& \multicolumn{2}{c}{0.2} & \multicolumn{2}{c}{0.15} & \multicolumn{2}{c}{0.1}\\
& Train & Test & Train & Test & Train & Test\\
\midrule
Dreamer
& 250.2 $\pm$ 9.6
& 3.0 $\pm$ 2.2
& 247.5 $\pm$ 1.0
& 0.0 $\pm$ 0.0
& 175.8 $\pm$ 55.8
& 13.5 $\pm$ 13.5 \\
TAD
& \textbf{951.9 $\pm$ 3.3}
& \textbf{876.9 $\pm$ 51.1}
& \textbf{927.6 $\pm$ 2.6}
& \textbf{800.1 $\pm$ 121.4}
& \textbf{608.5 $\pm$ 321.4}
& \textbf{491.6 $\pm$ 389.4}\\
\bottomrule
\end{tabular}
\caption{Average cumulative reward  (mean $\pm$ one std) over different target region (smaller $\beta$ represents smaller target region and more sparse return) of the best policy trained by Dreamer and TAD in Cheetah$\_$Speed. For each $\beta$, we train agents in the train tasks and evaluate them in both train and test environments. Numbers greater than 95 percent of the best performance for each environment are \textbf{bold}.}
\label{result_sparse_1}
\end{table*}

\begin{table*}[t]
\centering
\footnotesize
\begin{tabular}{ccccccccc}
\toprule
\multirow{2}*{SR}& \multicolumn{2}{c}{0.0} & \multicolumn{2}{c}{0.8} & \multicolumn{2}{c}{0.9}\\
& Train & Test & Train & Test & Train & Test\\
\midrule
Dreamer
& 250.2 $\pm$ 9.6
& 3.0 $\pm$ 2.2
& 237.4 $\pm$ 14.9
& 28.9 $\pm$ 14.6
& 168.7 $\pm$ 96.2
& 157.2 $\pm$ 188.4 \\
TAD
& \textbf{951.9 $\pm$ 3.3}
& \textbf{876.9 $\pm$ 51.1}
& \textbf{841.9 $\pm$ 154.9}
& \textbf{546.1 $\pm$ 291.6}
& \textbf{777.7 $\pm$ 170.1}
& \textbf{716.8 $\pm$ 136.5}\\
\bottomrule
\end{tabular}
\caption{Average cumulative reward (mean $\pm$ one std) over different sparse rates of the best policy trained by Dreamer and TAD in Cheetah$\_$Speed. For each sparse rate, we train agents in the train tasks and evaluate them in both train and test environments. Numbers greater than 95 percent of the best performance for each environment are \textbf{bold}.}
\label{result_sparse}
\end{table*}

\section{Ablation study}
\subsection{Dynamic generalization}

As TAD utilizes all historical information to infer the environment, it can be directly applied to more general settings with different observations, dynamics, and/or actions. To evaluate the performance of TAD in these settings, we have conducted the following four experiments based on DMControl:

\begin{itemize}
    \item \textbf{Acrobot-Cartpole-Pendulum}: includes 7 tasks of artpole-balance, cartpole-balance$\_$sparse, cartpole-swingup, cartpole-swingup$\_$sparse, acrobot-swingup, acrobot-swingup$\_$sparse, and pendulum-swingup. All these tasks aim to control a rod-shaped robot, while they own different \textbf{embodiments}, \textbf{dynamics}, and \textbf{observations}.
    \item \textbf{Walker-Cheetah-Hopper}: includes 6 tasks of walker-prostrate, walker-stand, walker-walk, cheetah-run, hopper-stand, and hopper-hop. The tasks own different \textbf{embodiments}, \textbf{dynamics}, \textbf{actions}, and \textbf{observations}.
    \item \textbf{Cheetah-run-mass($m$)}: This task distribution is based on the task Cheetah-run, and the mass of the robot is $m$ times that of the standard task. Thus different tasks own different \textbf{dynamics}. We train the agents in tasks with $m=0.6,1.0,1.2,1.6$ and test them in tasks with parameters $m=0.8, 1.4$.
    \item \textbf{Walker-walk-mass($m$)}: This task distribution is based on the task Walker-walk, and the mass of the robot is $m$ times that of the standard task. Thus different tasks own different \textbf{dynamics}. We train the agents in tasks with $m=0.6,1.0,1.2,1.6$ and test them in tasks with parameters $m=0.8, 1.4$.
\end{itemize}
Then, we test Dreamer and TAD in these four settings and report the results. TAD can achieve much greater performance and better convergence compared to Dreamer, as it can better infer the current task. This experiment indicates TAD’s potential in further handling dynamic generalization and even cross-embodiment tasks.

\subsection{Sparse reward}

In this part, we will evaluate TAD in more challenging settings with sparse rewards. First, we evaluate Dreamer and TAD in Cheetah$\_$speed with different $\beta$, which identifies the region of target speeds. With smaller $\beta$, the reward signals are more sparse since the target intervals are smaller. In the main experiment, we take $\beta=0.2$, and here we evaluate in $\beta=0.2,0.15,0.1$, of which the result is reported in Table~\ref{result_sparse_1}. For each $\beta$, we take the training parameters ($0.5,\beta$), ($1.5,\beta$), ($2.0,\beta$), ($3.0,\beta$) and test them in tasks with parameters ($1.0,\beta$), ($2.5,\beta$). As shown in Table~\ref{result_sparse_1}, with the decreasing of $\beta$, the performance of Dreamer and TAD decreases since reward signals are sparse, so exploration here is much more difficult. However, our TAD still significantly outperforms Dreamer and shows strong generalization abilities, which shows that TAD can effectively utilize historical information, even sparse rewards.  

Moreover, we design Cheetah$\_$speed$\_$sparse based on Cheetah$\_$speed. In Cheetah$\_$speed$\_$sparse($n$), we make the reward function sparse, i.e., the output reward is the same as Cheetah$\_$speed every $n$ timesteps (in step $n-1, 2n-1, ...$) and 0 otherwise, of which the sparse rate (SR) is $(n-1)/n$. We supplement experiments to evaluate the performance of Dreamer and TAD with $n=5$ (SR=0.8) and $n=10$ (SR=0.9). As shown in Table~\ref{result_sparse}, with the increasing of SR, although the performance of TAD decreases since inferring task context from sparse reward is extremely difficult, TAD still shows strong performance in train tasks and generalizes well to unseen test tasks.

\end{appendix}



\end{document}